\renewcommand{\paragraph}[1]{\noindent \textbf{#1}}
\DeclarePairedDelimiter\norm{\lVert}{\rVert}%
\newtcolorbox{methodbox}[2][]{%
  method,
  title={{\hypersetup{citecolor=white} #2}},
}
\crefname{methodbox}{method}{methods}
\Crefname{methodbox}{Method}{Methods}
\newtcolorbox[auto counter]{assumptionbox}[2][]{%
  assumption,
  title={Assumption~\thetcbcounter: {\hypersetup{citecolor=white} #2}},
}
\newcommand\Bbig{B_\mathrm{big}}
\newcommand\Bsmall{B_\mathrm{small}}
\newcommand\nGbig{\norm{G_\mathrm{big}}^2}
\newcommand\nGsmall{\norm{G_\mathrm{small}}^2}
\newcommand\upd[1]{{\color{orange} #1}}
\renewcommand\upd[1]{#1}  
\title{Critical Batch Size Revisited: A Simple Empirical Approach to Large-Batch Language Model Training}
\author{%
  William Merrill \quad
  Shane Arora \quad
  Dirk Groeneveld \quad
  Hannaneh Hajishirzi \\
  Allen Institute for AI \\
  \texttt{willm@allenai.org}
}
\begin{document}

\maketitle

\begin{abstract}
    The right batch size is important when training language models at scale: a large batch size is necessary for fast training, but a batch size that is \emph{too large} will harm token efficiency. To navigate this tradeoff, \citet{mccandlish2018empirical} suggest that a \emph{critical batch size} (CBS), below which training will not substantially degrade loss, can be estimated based on the gradient noise scale during training. While their method has been adopted in practice, e.g., when training GPT-3, strong assumptions are required to justify gradient noise as a proxy for the CBS, which makes it unclear whether their approach should be trusted in practice, limiting its applicability. In this paper, we introduce a simple, empirical approach to \emph{directly} measure the CBS and show how the CBS evolves over training. Applying our approach to the OLMo models, we find that CBS is near 0 at initialization, increases rapidly at first, and then plateaus as training progresses. Furthermore, we find that this trend holds across different model sizes (1B and 7B), suggesting CBS from small training runs can inform larger-scale training runs. Our findings about how the CBS changes over training motivate \emph{batch size warmup} as a natural way to reliably train language models at large batch size: start the batch size small and increase it as the CBS grows. To validate this claim, we use batch size warmup to train OLMo 1B to slightly better loss than the original training run with 43\% fewer gradient steps. This shows how our framework can be applied to reliably train language models at larger batch sizes, increasing data parallelism without compromising performance.
\end{abstract}

\section{Introduction}

Increasing the throughput of training is important for training large models.
A natural way to increase throughput is by increasing data parallelism, i.e., increasing the \emph{batch size} used during training so that more data can be processed at once and the number of sequential gradient steps can be decreased.
However, naively picking a very large batch size can degrade the performance achieved by a fixed token budget, as larger batches can show diminishing returns in their ability to estimate the population gradient.
Thus, in order to confidently train language models with higher token throughput, it is important to develop theoretical and empirical understanding of large-batch training.

One fundamental concept for large-batch training methodology is the following:

\begin{methodbox}{Critical Batch Size Hypothesis \citep{mccandlish2018empirical}}
There is some critical batch size $B^*$ up to which increasing the batch size (and appropriately modifying the  learning rate) approximately preserves the loss trajectory as a function of tokens trained, but, above which, the loss trajectory degrades.
\end{methodbox}

If such a CBS $B^*$ exists (and we can measure it), it represents a reasonable balance between efficiency and performance (i.e., loss) and is thus a practically useful batch size at which to train.
Working in a simplified theoretical setup, \citet{mccandlish2018empirical} derive a correspondence between the CBS and the \emph{gradient noise scale}, i.e., the variance of the per-example gradients from the training distribution.
They suggest that an estimator for the gradient noise scale should be used in practice as a proxy for the CBS, and this can in turn be used to set the batch size for large-scale pretraining runs.
The noise scale also appears to have been adopted in practice as a proxy for the CBS, having been mentioned explicitly in the GPT-3 technical report \citep{brown2020language}, and inspiring a flurry of methodological innovations for better noise scale estimation \citep{gray2023efficient,gray2024normalization}.

While appealing, the link between the gradient noise and the CBS requires several strong assumptions to justify: specifically, it assumes the SGD optimizer and that gradients are well-conditioned (cf.~\Cref{sec:noise-scale}).
Thus, it is unclear whether the noise scale should be a meaningful proxy for the CBS for language model pretraining in practice, where the Adam optimizer is often used and the optimization may not be well-conditioned.
To this end, we aim to address the following practical questions that remain for effectively leveraging the CBS viewpoint to train language models at larger batch sizes:

\begin{enumerate}
    \item How can we measure the CBS cheaply with minimal assumptions before launching a pretraining run?
    \item How does the CBS change over the course of pretraining and as a function of model size? 
    \item Having measured the CBS, how should we adapt the batch size, learning rate, and other parameters over the course of a pretraining run?
\end{enumerate}

This work documents our attempts to answer these questions in order to operationalize the CBS for large-batch training.
We focus our investigation on the OLMo models \citep{groeneveld-etal-2024-olmo,olmo20252olmo2furious}, due to their open weights and data, making the following contributions:

\begin{enumerate}
    \item First, we introduc an empirical method to directly measure the CBS via \textbf{branched training}. Our method avoids strong assumptions needed to justify the noise scale method from prior work. This lets us trust it more than the noise scale method, which we find unreliable.
    \item We use our method to study how our CBS measurement changes over the course of training, finding it improves rapidly initially but than flattens off. Further, we find that the CBS not depend strongly on model size, in line with past findings using different methodology \citep{zhang2024cbs}.
    \item Our knowledge of the local CBS across training checkpoints suggest a natural \textbf{batch size warmup} strategy for large batch training: begin training with a small batch size and double it whenever the CBS increases sufficiency. We use this strategy to train 1B parameter models with 43\% fewer gradient steps without degrading (and, in fact, slightly improving) final loss.
\end{enumerate}
Overall, our empirical framework for measuring and leveraging the CBS provides simple and principled methodology for improving the efficiency of large-scale training runs and addressing other fundamental questions in the science of language model pretraining.


\section{Background: Estimating CBS via Gradient Noise Scale} 
\label{sec:noise-scale}

Past empirical work has aimed to measure the CBS by 
launching many training runs to the same target loss, which is expensive \citep{zhang-2019-algorithmic,zhang2024cbs},
or by using the gradient noise scale as a proxy \citep{mccandlish2018empirical,gray2023efficient,gray2024normalization}. 
In contrast, we will introduce a new method that uses a small amount of additional training to estimate the CBS, which is less expensive than launching many full training runs and does not make any strong assumptions like the noise scale.
Before introducing our method, we review the noise scale framework used to estimate the CBS in prior work and the underlying assumptions it relies on.

\citet{mccandlish2018empirical} suggest that the CBS can be measured in terms of the gradient noise scale, i.e., the variance of the gradients within a batch.
Concretely, their recommendations to measure the CBS and adapt the learning rate are as follows:

\begin{methodbox}{Existing Method: Noise Scale Proxy for CBS \citep{mccandlish2018empirical}}
    Let $G$ be the full gradient and let $\Sigma$ be the covariance matrix for the gradient across data examples.
    We first compute $\mathcal B_\mathsf{simple}$ as a proxy for the CBS (using an efficient statistical estimator):
    \begin{equation*}
        \mathcal B_\mathsf{simple} = \frac{\mathrm{tr}(\Sigma)}{\norm{G}^2} . \label{eq:noise-scale}
    \end{equation*}
    We set the modified batch size to $\mathcal B_\mathsf{simple}$ and \emph{linearly} scale the learning rate $\eta^*$:
    \begin{align}
        B^* &= \mathcal B_\mathsf{simple} \\
        \eta^* &= \frac{B^*}{B} \cdot \eta . \label{eq:linear-scaling}
    \end{align}
\end{methodbox}

This viewpoint is attractive due to its simplicity and tractability, and it has inspired improved methods for estimating the noise scale \citep{gray2023efficient,gray2024normalization}.
However, the link between noise scale and CBS requires several strong assumptions to justify, which we will argue motivates revisiting other approaches for measuring the CBS.
\citet{mccandlish2018empirical} consider training a model on a loss surface where the loss landscape is well approximated by its second-order Taylor expansion.
The first crucial assumption in their analysis is that optimizer used to decrease the loss is SGD:

\begin{assumptionbox}{SGD Optimizer \citep{mccandlish2018empirical}}
    The step taken to reduce the loss is a noisy estimate of the true gradient, computed via $B$ samples.
\end{assumptionbox}

This assumption may seem benign, but it is worth noting that it is \emph{not} typically met in practice, as LMs are trained with the Adam optimizer \citep{kingma-2017-adam}.
Moreover, by analyzing training dynamics in terms of stochastic differential equations \citep{li2021validity}, \citet{malladi2022sdes} argue theoretically that \Cref{eq:linear-scaling} is an appropriate scaling rule for SGD, but not for Adam, where a \emph{square-root} scaling rule is more principled.
\upd{The square-root scaling for Adam is also supported by the theoretical analysis of \citet[Equation 4]{li2024surge}.
Thus, when training with Adam, it seems that the linear scaling rule assumed by \citet{mccandlish2018empirical} should not apply.}

A more fundamental issue is that \citet{mccandlish2018empirical} also require another strong assumption to derive the noise scale method for estimating the CBS.
In general, their noise-scale-based estimate of the CBS has a more complex form than $\mathcal B_\mathrm{simple}$ involving the Hessian $H$:
\begin{equation*}
    \mathcal B_\mathrm{noise} = \frac{\mathrm{tr}(\Sigma H)}{G^\top H G} .
\end{equation*}
In order to justify that $B^*$ can be computed as $\mathcal B_\mathrm{simple}$,
\citet{mccandlish2018empirical} assume:

\begin{assumptionbox}{Well-Conditioned Optimization \citep{mccandlish2018empirical}}
    The Hessian $H$ is a multiple of the identity matrix. It follows that $\mathcal B_\mathrm{noise} = \mathcal B_\mathrm{simple}$.
\end{assumptionbox}

This is a strong assumption required to justify using $\mathcal B_\mathrm{simple}$ as a proxy for the CBS because computing Hessians would be too expensive to be practice.
\citet{mccandlish2018empirical} suggest informally that, without this assumption, $\mathcal B_\mathrm{simple}$ \emph{may} still be correlated with $\mathcal B_\mathrm{noise}$, but it is not obvious why this should be the case.
Even if this is true, it still poses a real problem for the noise scale methodology, since the goal of the method is to produce an \emph{absolute} measure of $B^*$. It is unclear for practitioners what coefficient should be used to translate $\mathcal B_\mathrm{simple}$ to $B^*$---and, more fundamentally, whether it is even valid to assume that such a coefficient exists.

\section{Our Method: Measuring the CBS via Local Branched Training} \label{sec:our-method}

As discussed in \Cref{sec:noise-scale}, using the gradient noise scale $\mathcal B_\textrm{simple}$ as a proxy to estimate the CBS relies on several strong assumptions.
In light of this, it unclear whether we can trust the gradient noise scale as a proxy for CBS.
We thus argue that we should instead aim to measure the CBS \emph{directly}, without the need for any strong assumptions to justify an indirect proxy.
After introducing our measurement approach in this section, we will show in \Cref{sec:batch-size-warmup} how these measurements can be applied to train language models to the same (or better) target loss with fewer gradients steps.

\subsection{Method}

We introduce a simple \textbf{branched training} method that directly approximates the CBS by launched branched training runs from a checkpoint, which allows us to identify $B^*$ as the largest batch size that does not degrade in loss relative to smaller batch sizes as visualized in \Cref{fig:measure-cbs}. 
To make this tractable, we train only for a fixed token budget $\Delta$, assuming that if $B^*$ recovers in loss by $\Delta$, its loss will continue to match smaller batch sizes onwards as well.
This allows us to estimate the CBS with only a small amount of additional training (controlled by $\Delta$). 
Further, as we will find later in \Cref{fig:measure-cbs}, the CBS trend remains consistent across model sizes, so CBS measurements with small models could be used to inform large-scale training runs.


\begin{methodbox}{Our Method: Branched Training to Measure CBS} \label{box:empirical-cbs}
    Given a training checkpoint with original batch size $B$ and learning rate schedule $\eta$, we aim to measure the critical batch size $B^*$.
    Let $f(\eta)$ be the learning rate scaling rule: $f(k) = k$ for SGD and $\sqrt{k}$ for Adam.
    We create several training branches with modified batch size $k \cdot B$ and learning rate $f(k) \cdot \eta$ and train for a small number of tokens $\Delta$ to get loss $L_k$---following standard practice, we take $L_k$ to be the smoothed loss.
    We then define $k^*$ as the maximum $k$ such that, for all $k < k^*$, $L_{k^*} \leq L_k + \epsilon$, where $\epsilon$ is a tolerance parameter for ``similar'' losses.
    
    \paragraph{} We  then define the CBS $B^*$ and scaled learning rate $\eta^*$ as
    \begin{align*}
        B^* &= k^* \cdot B \\
        \eta^* &= f(k^*) \cdot \eta .
    \end{align*}
\end{methodbox}

\begin{figure}
    \centering
    \includegraphics[width=0.32\linewidth]{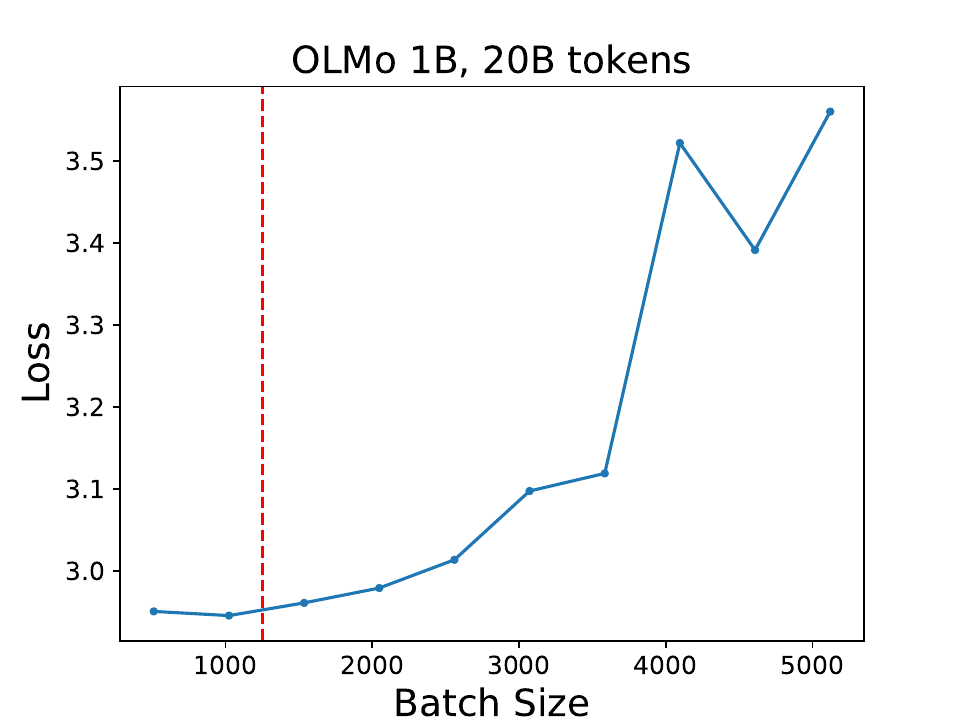}
    \includegraphics[width=0.32\linewidth]{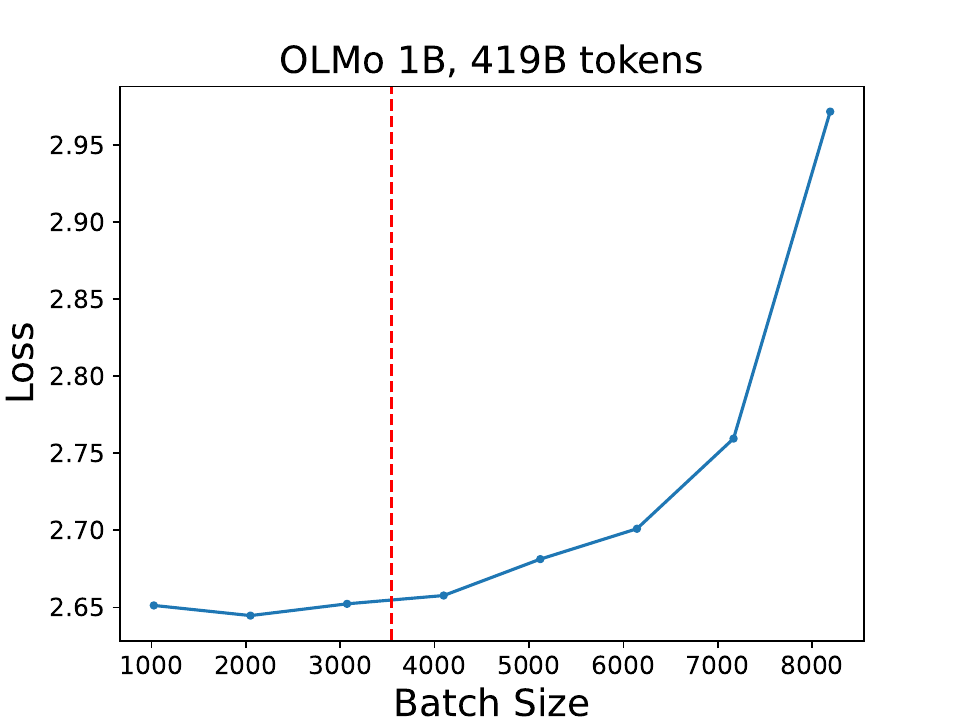}
    \includegraphics[width=0.32\linewidth]{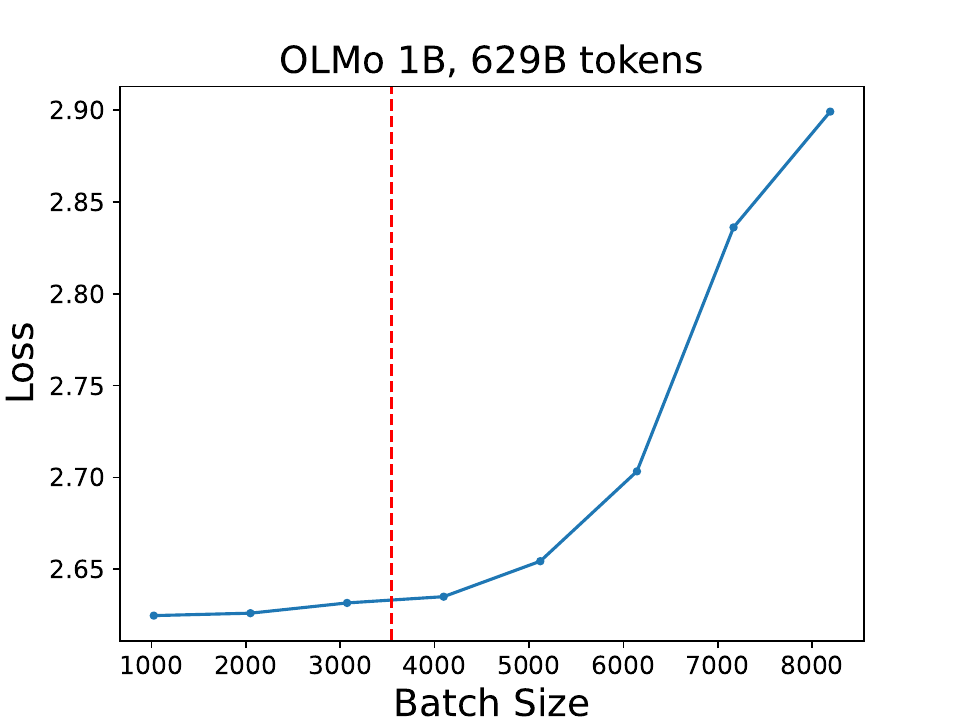}
    \caption{Smoothed final loss after branched training at particular checkpoints, with $B^*$ shown as the dotted red line. Each point represents the loss achieved by a specific branched training run after 2B tokens. Our method detects the point at which loss starts to increase, heuristically tolerating noise within $\epsilon = 0.01$.
    These plots show how this plays out for three particular checkpoints; see \Cref{sec:cbs-measurement-details} for loss curves for all checkpoints.}
    \label{fig:loss-curves}
\end{figure}

Our method will empirically estimate the largest batch size $B^*$ at which the local optimization trajectory recovers roughly to its original loss after $\Delta$ steps.
In contrast to the strong assumptions needed to justify the gradient noise scale, the only (weak) assumption our method relies on is:

\begin{assumptionbox}{Local Recovery}
    If the loss achieved by batch size $B^*$ recovers to match the loss with batch size $B < B^*$ after training for $\Delta$ tokens, the loss trajectories will remain the same beyond $\Delta$ as well.
\end{assumptionbox}

\paragraph{Implementation Details.} Our method requires specifying two parameters: the window size $\Delta$ and the loss tolerance $\epsilon$.
We also apply smoothing to the loss to reduce noise.
In more detail:

\begin{compactenum}
    \item \textbf{Window Size.}
    Because the optimizer state must update when the batch size is changed, we expect an immediate spike in the loss after adjusting the batch size.
    The window size $\Delta$ represents the number of steps we are willing to wait for the loss to recover from this bump.
    The CBS measurements could, in principle, depend on $\Delta$, with larger values of $\Delta$ potentially producing larger CBS estimates.
    We set $\Delta$ to 2B tokens, which we take as a small, conversative window size relative to our overall pretraining budget of 600B tokens.

    \item \textbf{Loss Tolerance.} Viewing loss as a function of batch size multiplier $k$ (cf.\Cref{fig:loss-curves}), we need a way to determine whether the loss at $k^*$ has increased relative to all $k<k^*$. We operationalize this with a tolerance parameter $\epsilon$, which we set to 0.01 arbitrarily.
    In principle, tolerance could be set in a more principled way using a statistical test in future work.

    \item \textbf{Loss Smoothing.} Pretraining loss is noisy at the batch level, so, following standard practice, we apply exponentially moving average smoothing with parameter $\alpha$ set to 0.5.
\end{compactenum}

It is worth comparing our method to other work that empirically measures the CBS.
Our method of training for a fixed token budget $\Delta$ and measuring the change in loss can be understood as the dual view to measuring the number of steps required to achieve a target loss, which has been used in prior work \citep{zhang-2019-algorithmic,zhang2024cbs}.
Reformulating the measurement in this way has the nice property that the training budget can be fixed in advance.
Under the local recovery assumption, it also allows us to train for only $\Delta$ tokens, which means we do not have to launch full training runs for each batch size.
Finally, unlike \citet{zhang2024cbs}, we apply our method at various checkpoints throughout training, whereas they apply it only from initialization.
This means we can estimate the \emph{local CBS} at a specific point in training rather than just the \emph{global CBS}.

\subsection{Experimental Setup}

We aim to measure the CBS over the course of model training and the role of model size.
Because our method requires pretraining checkpoints and access to the pretraining data, we use the OLMo 1B and OLMo 7B models for our experiments \citep{olmo20252olmo2furious}, whose pretraining data is openly available \citep{soldaini2024dolma}.
For each model, we take a variety of checkpoints over the course of training, allowing us to assess how the CBS changes over the course of training; see \Cref{sec:cbs-measurement-details} for more details.
We also compute the noise scale across training checkpoints using the estimator proposed by \citet{mccandlish2018empirical} to assess whether it is a valid proxy for the CBS we measure.

We define the interval for the CBS at each checkpoint by choosing $B^*$ (as defined in \Cref{sec:our-method}) as a lower bound, and by picking the least $k > k^*$ as an upper bound. The plotted point represents the geometric mean of these interval endpoints.
We measure the CBS in documents, with a pretraining sequence length of 4096 tokens per document.

\subsection{Results: CBS Over Training and Across Model Sizes}

\begin{figure}
    \centering
    \includegraphics[width=0.48\linewidth]{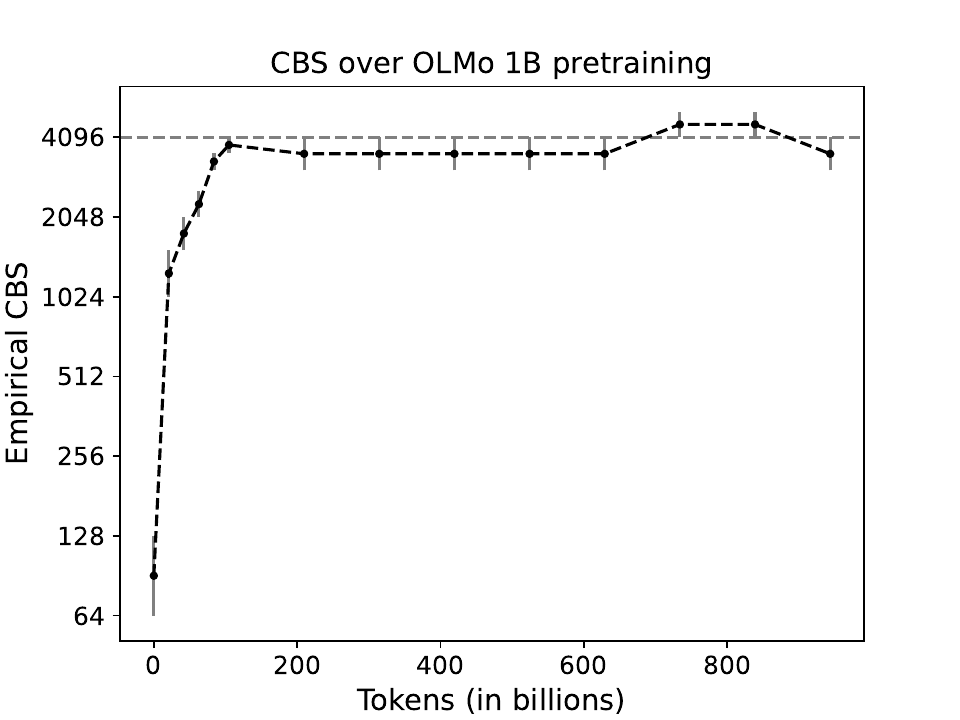}
    \includegraphics[width=0.48\linewidth]{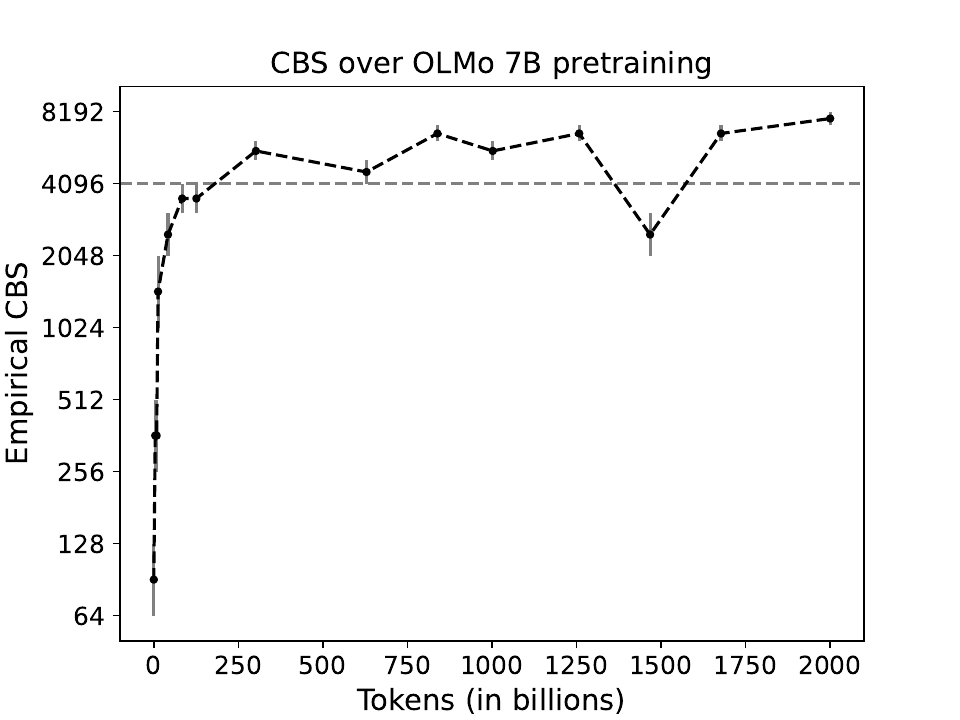}
    \caption{CBS over training for OLMo 1B and 7B, measured in documents (4096 tokens per document). The qualitative trend is similar across both model sizes. The CBS starts near 0, grows rapidly but diminishingly, and plateaus around 4096.}
    \label{fig:measure-cbs}
\end{figure}

\Cref{fig:measure-cbs} shows the CBS $B^*$ measured via \Cref{box:empirical-cbs} across training checkpoints for OLMo 1B and 7B.
The CBS increases over training in a similar way for both model sizes:
the CBS starts near 0, grows rapidly within the first 50k tokens, and then plateaus around 4096.

\paragraph{Impact of Model and Data Size.}
Prior work has suggested that the CBS is largely independent of model size, scaling primarily with data size \citep{zhang2024cbs,bergsma-2025-powerlines}.
This is largely consistent with our findings, as the CBS curves like qualitatively similar at the 1B and 7B scales.
In addition, the fact that the CBS grows over training suggests that the ``aggregate'' CBS should also increase as we train on more data since the average CBS over the course of training should increase.
We elaborate on this in \Cref{sec:scaling-laws}: while CBS growth over training predicts that aggregate CBS should increase with data size, it is unclear whether the CBS growth pattern we observe would predict the aggregate CBS $\propto \sqrt{D}$ scaling law found in prior work \citep{zhang2024cbs,bergsma-2025-powerlines}.

\paragraph{Comparison with Gradient Noise Scale.}
As discussed in \Cref{sec:noise-scale}, the gradient noise scale has been proposed as a proxy to measure the CBS \citep{mccandlish2018empirical}, though this connection relies on strong assumptions to justify.
We thus empirically compare our measurement of the CBS to the gradient noise scale.
\Cref{fig:noise-scale} shows that, for both OLMo 1B and 7B, the gradient noise scale underestimates the CBS by several orders of magnitude. Furthermore, especially for OLMo 7B, the qualitative trend does not match the CBS.
For OLMo 1B, the qualitative pattern is more similar.
However, since this similarity is not found for both models, we conclude that, in general, the noise scale cannot be used reliably as a proxy for the CBS.

\begin{figure}
    \centering
    \includegraphics[width=0.48\linewidth]{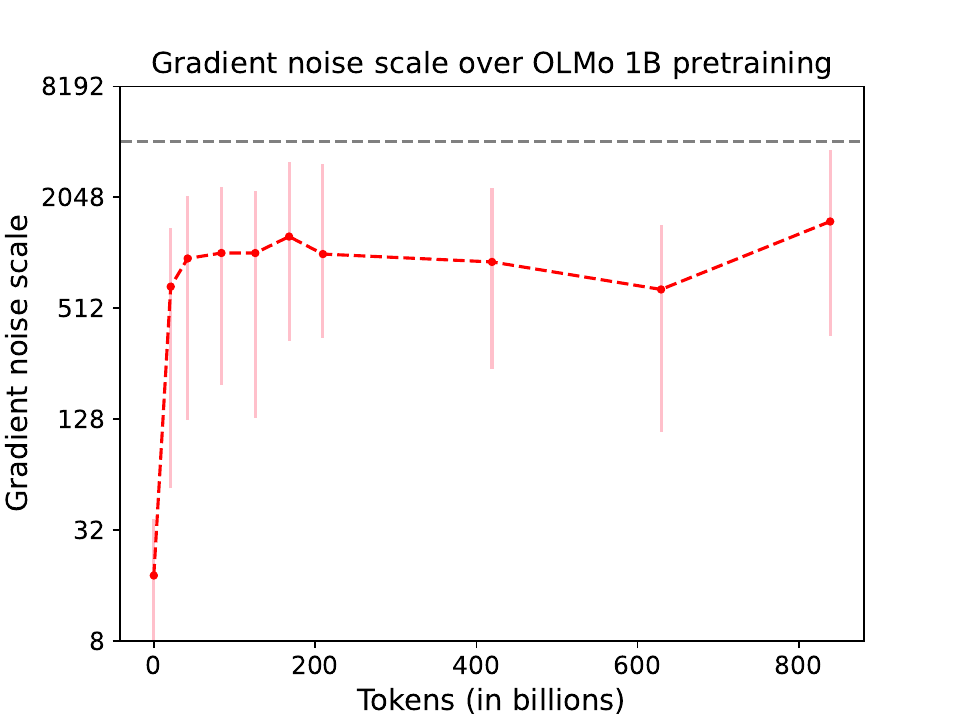}
    \includegraphics[width=0.48\linewidth]{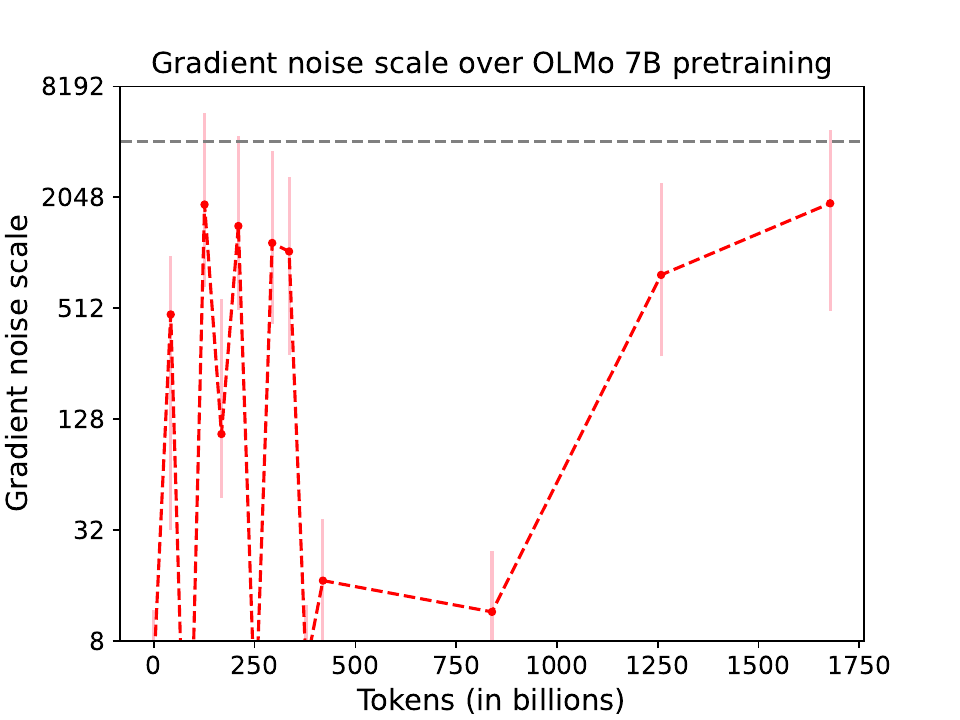}
    \caption{Gradient noise scale for OLMo 1B and 7B computed via the estimator of \citet{mccandlish2018empirical} with 95\% confidence intervals; details in \Cref{app:noise-scale}.
    The gradient noise scale underestimates the CBS (cf.~\Cref{fig:measure-cbs}) and the qualitative trend does not clearly match, especially for OLMo 7B.}
    \label{fig:noise-scale}
\end{figure}

\paragraph{Motivating Batch Size Warmup.}
A central takeaway from these results is that the CBS starts near 0, grows rapidly, and then plateaus.
This suggests that \emph{batch size warmup}, where the batch size is dynamically increased over the beginning of a training run, is a natural way to increase the effective batch size for most of training while avoiding training with a batch size above the CBS.
In the next section, we will discuss our implementation and validation of this idea.

\section{Application: Batch Size Warmup for Larger Batch Training} \label{sec:batch-size-warmup}

A straightforward way to speed up training is to increase the batch size, and, as long as the new batch size is less than the CBS, we can be confident that the loss will be minimized about as effectively as before.
Thus, we can leverage knowledge of the CBS to speed up training.

Furthermore, our local knowledge of the CBS can be leveraged to do this more effectively than if we only had a global sense of the CBS.
If we simply increased a fixed batch size, this would mean that we are training with a batch size that is too large for a short period at the beginning of training, which could potentially destabilize training or degrade final performance. To get around this, we can use our measurement of the local CBS to ``warm up'' the batch size.
We will train at a smaller batch size for the beginning of training when the CBS is small, and then switch to a larger batch size once the CBS grows large enough.
More generally, we can aim to double the batch size whenever we determine the CBS has doubled.
This should allow us to benefit from training at a larger batch size for \emph{most} of training without training at a batch size that is too large at the beginning of training, which could degrade the final loss achieved by the training run.

\subsection{Methodology for Batch Size Warmup} \label{sec:batch-size-warmup-method}

Given an existing training run (at a small batch size), we aim to adapt it to achieve the the same final loss with fewer overall gradient steps (i.e., a larger batch size for most of training).
Assuming an original fixed batch size $B$ and base learning rate $\eta$, our method for training with \emph{batch size warmup} is as follows.
First, we use branched training CBS method (\Cref{sec:our-method}) to measure $B^*_t$, the CBS after training for $t$ tokens, using an existing checkpoint.
When training a new model, we set the batch size and and learning rate as follows:

\begin{methodbox}{Batch Size Warmup} \label{box:batch-size-warmup}
    \begin{enumerate}
        \item At $t = 0$, initialize the batch size to $B_0 = B$ and base learning rate to $\eta_0 = \eta$.
        \item After training for $t$ tokens, if we determine that the CBS exceeds the current batch size ($B^*_t > 2B_t$), we double the current batch size and update the base learning rate following the square-root scaling rule (cf.~\Cref{sec:our-method}):
        \begin{align*}
            B_{t+1} &= 2 B_t \\
            \eta_{t+1} &= \sqrt{2} \eta_t .
        \end{align*}
    \end{enumerate}
\end{methodbox}

This method will ensure that the batch size will increase over training \emph{safely}, i.e., never exceeding the CBS.
Since we found that the CBS increases quite rapidly at the beginning of training, this method will double the batch size twice early in training, reaching a maximum batch size of 4096 by 503B tokens.
This means that we will effectively train at a larger batch size for most of training compared to the original small-batch run.
But, crucially, we can be confident that the batch size will never increase our batch size above the CBS. Thus, we expect the final loss will be comparable to that of the original training across thet training trajectory.

\paragraph{Implementation Details.} The main implementation question is how to operationalize checking $B^*_t > 2B_t$ in order to double the batch size.
In practice, we do this heuristically based on the measurements in \Cref{fig:measure-cbs}: since we only double the batch size twice over training, this involves just picking two thresholds.
In addition, this process could be automated for future training runs using online measurements of the CBS paired with some kind of curve fitting or statistical test.

\paragraph{Design Choices.} As defined above, our method uses a square-root scaling rule, which is well-motivated for Adam \citep{malladi2022sdes}, but in principle a linear scaling rule could also be used.
As batch size warmup only modifies the base learning rate, it is compatible to overlay with existing learning rate schedules (in practice, the models we use follow a cosine schedule).
Finally, we choose to only increase the batch size at powers of two because it is not clear that having a more precise match to the CBS beyond the order-of-magnitude level would be particularly useful and because the OLMo codebase requires the number of GPUs $g$ divides the batch size.
\upd{Thus, doubling is convenient because it easily guarantees the new batch size remains a multiple of $g$.
However, in principle one could update more frequently, e.g., by setting updating the batch size the largest multiple of $g \leq B^*_t$.}

\paragraph{Connection to Existing Methods.}
Our batch size warmup method is similar to preliminary experiments by \citet[Appendix D]{mccandlish2018empirical} using a dynamic batch size on the SVHN dataset. However, they use their noise scale method (\Cref{sec:noise-scale}) to set the batch size, which is potentially unreliable, and only consider small-scale classification.
In the context of image classification, prior work explored replacing learning rate decay with increasing the batch size \citep{smith-2018-decay}.
This is conceptually related to our batch size warmup approach in that it leverages scaling rules between batch size and learning rate to achieve larger batch training.
However, our method can apply on top of existing learning rate schedules, can generalize to Adam (vs.~SGD), and, most crucially, ensures that the batch size number exceeds the critical batch size.

\subsection{Experimental Setup}
We evaluate the viability of training with batch size warmup via the following training runs. For all models, we use the default OLMo 1B pretraining hyperparameters unless stated otherwise.

\begin{enumerate}
    \item \textbf{Batch Size Warmup:} We train OLMo 1B with our batch size warmup method as detailed in \Cref{sec:batch-size-warmup-method}.
    We initialize the batch size to 1024.
    Based on a manual reading of the CBS measurements in \Cref{sec:our-method}, we determine that the CBS reaches 2048 by 168B tokens and 4096 by 503B tokens.\footnote{\upd{These thresholds were determined from preliminary CBS measurements at an earlier stage of the project. Our measurements changed slightly after this point, but we deemed that it was not worth it to restart the expensive training runs because we do not think the results should be sensitive to a precise choice of threshold. Going forward, it could be useful to establish systematic methodology for choosing batch size warmup thresholds given the CBS measurements.}}
    Our method thus doubles the batch size at each of these points.
    \item \textbf{Small-Batch Control:} We train OLMo 1B with batch size $B = 1024$ and base learning rate $\eta = \sqrt{2} \cdot 0.0004$.\footnote{The default hyperparameters for OLMo 1B use a batch size of 512 and learning rate of 0.0004. Since we chose an initial base size of 1024 for our experiments, we adapted the learning rate appropriately via the square-root scaling rule \citep{malladi2022sdes}.}
    We expect that our batch size warmup method should be able to achieve similar final loss to the small-batch control with fewer gradient steps.
    \item \textbf{Large-Batch Control:} We train OLMo 1B with a fixed large batch size of $B = 4096$ and base learning rate $\eta = 2\sqrt{2} \cdot 0.0004$.
    Since its batch size will exceed the critical batch size for the initial part of training, we expect that the large-batch control will show degraded loss compared to our batch size warmup method.
\end{enumerate}

\begin{figure}
    \centering
    \includegraphics[width=0.98\linewidth]{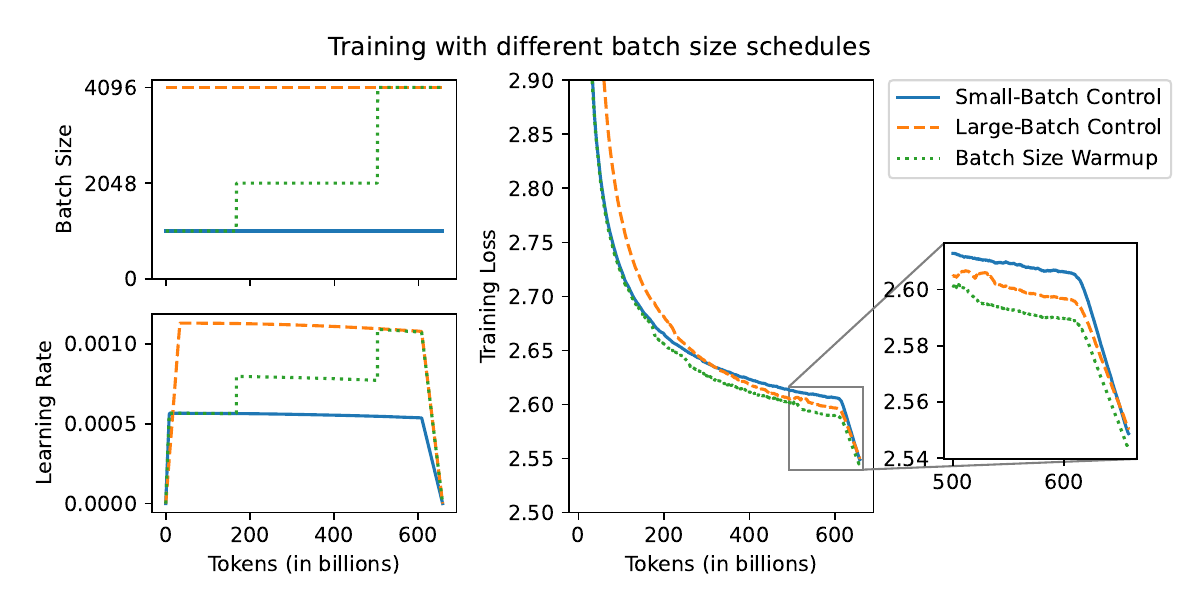}
    \caption{Batch size schedule (left, top), learning rate schedule (left, bottom) and training loss (right) for the pretraining of an OLMo model with different batch size schedules. Training loss is smoothed by taking the moving average over the past 10B tokens.}
    \label{fig:training-diff-methods}
\end{figure}

\begin{table}[!t]
    \centering
    \begin{tabular}{c|cc|c}
        \toprule
        Method & $\downarrow$ PT Loss & $\downarrow$ MT Loss & $\uparrow$ Grad.~Steps Saved \\
        \midrule
        Batch Size Warmup (Ours) & \textbf{2.5891} & \textbf{2.5433} & 43\% \\  
        Small-Batch Control & 2.6057 & 2.5486 & 0\% \\ 
        Large-Batch Control & 2.5962 & 2.5506 & \textbf{75\%} \\ 
        \bottomrule
    \end{tabular}
    \vspace{1em}
    \caption{Loss after pretraining (PT) and mid-training (MT) for each run, averaged over the past 10B tokens and the percentage of gradient steps saved vs.~the small-batch control (including annealing steps).
    Both before and after annealing, batch size warmup slightly outperforms both controls in loss.
    }
    \label{tab:loss-and-time}
\end{table}

\paragraph{Evaluations.} Language model training consists of two steps, pre-training and mid-training \citep{olmo20252olmo2furious}. We evaluate training loss as well as several measures of out-of-distribution loss at the end of the pre-training stage as well as after the mid-training stage.
\begin{compactitem}
    \item \textbf{Loss after Pretraining.} We evaluate the loss at the end of the pre-training stage for all models. 
    The original training run for OLMo 1B ran for 4T tokens, so we do not have the resources to replicate this full training run.
    Instead, we pre-train for 608B tokens using the original learning rate schedule for the longer run.
    \item \textbf{Loss after Mid-Training.}
    We report loss at the end of mid-training stage, which more closely reflects how these checkpoints are used in language model training. 
    Specifically, starting from final pretraining checkpoint, we linearly anneal the remaining learning rate down to 0 for 50B tokens, keeping the batch size fixed at its final value (\Cref{fig:loss-curves}).
    Because mid-training is a part of the standard language modeling pipeline, we take the loss after mid-training to represent the loss that would be achieved by these training runs in a practical context.
    Furthermore, \citet{olmo20252olmo2furious} suggest that this kind of learning rate annealing can induce significant gains in loss for partial pretraining runs.
    We thus also take the loss after mid-training loss as a proxy for how these runs would compare if we were to fully train them for the full learning rate schedule.

    \item \textbf{Out-of-Distribution Losses.}
    To measure performance beyond the training loss, we also track three kinds of out-of-distribution losses.
    The first two are straightforward cross-entropy losses on common pretraining datasets, C4~\citep{C4} and The Pile~\citep{Pile}.
    The third out-of-distribution loss follows the method from \cite{Bhagia2024EstablishingTS}, which argues for computing the loss (in bits-per-byte; BPB) on the correct answers of multiple popular question-answering datasets such as ARC-Easy, ARC-Challenge, MMLU, etc. For the full list of datasets used, see Appendix \ref{app:bpb}.
\end{compactitem}

\subsection{Batch Size Warmup Results}

\begin{table}[!t]
    \centering
    \begin{tabular}{c|cc|cc|cc}
        \toprule
        Method & \multicolumn{2}{c}{$\downarrow$ Task BPB} & \multicolumn{2}{|c|}{$\downarrow$ C4} & \multicolumn{2}{|c}{$\downarrow$ Pile} \\
        & PT & MT & PT & MT & PT & MT \\
        \midrule
        Batch Size Warmup (Ours) & 1.0316 & 1.0076 & \textbf{2.8049} & \textbf{2.7597} & \textbf{2.1916} & 2.1521 \\
        Small-Batch Control & \textbf{1.0112} & \textbf{0.9999} & 2.8196 & 2.7622 & 2.2073 & \textbf{2.1471} \\
        Large-Batch Control & 1.0571 & 1.01927 & 2.8107 & 2.7658 & 2.1996 & 2.1586 \\
        \bottomrule
    \end{tabular}
    \vspace{1em}
    \caption{According to the method from \cite{Bhagia2024EstablishingTS}, we evaluate downstream performance via BPB on downstream tasks, as well as cross-entropy loss on two held-out sets, C4 and the Pile, both after pretraining and after mid-training.
    Batch-size warmup generally performs comparably or better compared to the small-batch control, suggesting it does not degrade downstream performance.
    }
    \label{tab:downstream}
\end{table}

We compare batch size warmup to the small and large-batch controls at two points: first, after the pretraining stage and, second, after the mid-training stage that anneals the remaining learning rate down to 0.
After pretraining, batch size warmup reaches lower loss than both controls, as shown in \Cref{fig:measure-cbs} and \Cref{tab:loss-and-time}, outperforming the small-batch control in loss by a margin of 0.0166.
After mid-training, batch size warmup still slightly outperforms the small-batch control in loss, this time by a smaller margin of 0.0053.
The large-batch control now achieves the worst overall loss.
Overall, batch size warmup slightly exceeds the small-batch control in loss while using 43\% fewer gradient steps.
In contrast, the large-batch control uses 75\% fewer gradient steps, but its final loss degrades compared to the small-batch control.
Thus, we find batch size warmup is a reliable method to train with fewer gradient steps without degrading final loss.


Beyond the final loss, we also evaluate whether batch size warmup leads to similar downstream performance as small-batch training using our downstream-task BPB evaluation as well as validation loss on C4 and the Pile.
As shown in \Cref{tab:downstream}, we find that BPB on validation sets is broadly competitive with the small-batch control, performing best on three out of four conditions considered.
With the caveat that precisely measuring the downstream impact of pretraining decisions is difficult, this suggests that, just as batch-size warmup does not degrade final loss, it should not degrade downstream measures of performance either.

\section{Conclusion}

In this work, we introduced a simple empirical method for estimating the CBS throughout language model pretraining runs, which can be used to increase batch size (and thus effective token throughput) for large scale training runs without sacrificing performance.
As we discussed, the existing noise scale method \citep{mccandlish2018empirical} for estimating the CBS requires strong assumptions to justify, which our approach avoids.
We used our method to study the evolution of the CBS during training for the OLMo models, finding that CBS increases monotonically but diminishing over the course of training, and that CBS does not seem to depend on model size, in line with prior work \citep{zhang2024cbs}.
Guided by these findings, we showed that our measurements could be used to pick a \textbf{batch size warmup} schedule that enables larger batch training without harming final training loss.
We take these results to demonstrate the validity and utility of our CBS measurement approach, and believe our framework could be useful for increasing the efficiency of future large-scale pretraining efforts.

\upd{There are several details and extensions of our method that would be interesting to explore going forward.
First, it would be interesting to carry out a more systematic analysis of the impact of the hyperparameter $\Delta$ on CBS measurements: how sensitive is the method to $\Delta$ and do different values for $\Delta$ potentially bias the measurement?
It would also be interesting to further investigate the conditions under which noise scale might be a meaningful CBS proxy and used for batch size warmup.
When applying our CBS measurements to picking a batch size warmup schedule, we manually picked doubling threshold based on the CBS.
Going forward, it would be useful to establish a systematic way to set threshold for increasing the batch size given CBS measurements.
There are also other potential methodological improvements, such as removing the arbitrary power of 2 constraint and estimating the CBS in an online fashion.
Overall, these improvements would allow batch size warmup to be applied more robustly and easily across different pretraining setups.}

\section*{Acknowledgments}

We thank Joel Hestness, Sadhika Malladi, and Ananya Harsh Jha for discussions.

\bibliographystyle{abbrvnat}
\bibliography{references}

\clearpage
\appendix
\section{CBS Measurement Details} \label{sec:cbs-measurement-details}

Our empirical method for measuring the CBS \Cref{sec:our-method} is, in principle, sensitive to the choice of checkpoints and batch size multipliers.
We therefore document the checkpoints and multipliers we used here.

\paragraph{OLMo 1B.}
When measuring the OLMo 1B CBS with branching, we set the base batch size to 1024 and the base learning rate to $0.0004 \cdot \sqrt{2}$, reflecting the default batch size of 512 and learning rate of 0.0004 in the OLMo codebase under a square-root scaling rule \citep{malladi2022sdes}.
We then chose the following checkpoints and multipliers $k$:
\begin{enumerate}
    \item Step 0: $k$ ranging over 0.0625, 0.125, 0.25.
    \item Steps 10K, 20K, $\ldots$, 50K: $k$ ranging over 0.5, 1, $\ldots$, 5.
    \item Steps 100K, 150K, $\dots$, 450K: $k$ ranging over 1, 2, $\ldots$, 8.
\end{enumerate}
\Cref{fig:all-losses-1b} shows loss vs.~batch size plots for all checkpoints of OLMo 1B.

\paragraph{OLMo 7B.} We set the base batch size to 1024 and the base learning rate to $0.0003$, as specified in the OLMo codebase.
We then chose the following checkpoints and multipliers $k$:
\begin{enumerate}
    \item Step 0: $k$ ranging over 0.0625, 0.125, 0.25.
    \item Steps 1K, 2K, 3K: $k$ ranging over 0.25, 0.5, 1, 2, 3, 4.
    \item Steps 10K, 20K, 30K: $k$ ranging over 1, 2, 3, 4, 5.
    \item Steps 72K, 150K, 200K, 239K, 300K, 350K, 400K, 477K: $k$ ranging over 1, 2, 3, 4, 5, 6, 7, 8.
\end{enumerate}
\Cref{fig:all-losses-7b} shows loss vs.~batch size plots for all checkpoints of OLMo 7B.
These checkpoints were chosen manually as we developed this project.
Over the course of our experimentation, we launched many additional runs beyond the ones discussed above.
Since the choice of $k$ can influence the conclusions of our method, we filtered down the included runs to make the choice of $k$ systematic.

1B runs were launched on a single node of H100 GPUs, and 7B runs were launched on 8 nodes.

\begin{figure}
    \centering
    \includegraphics[width=0.32\linewidth]{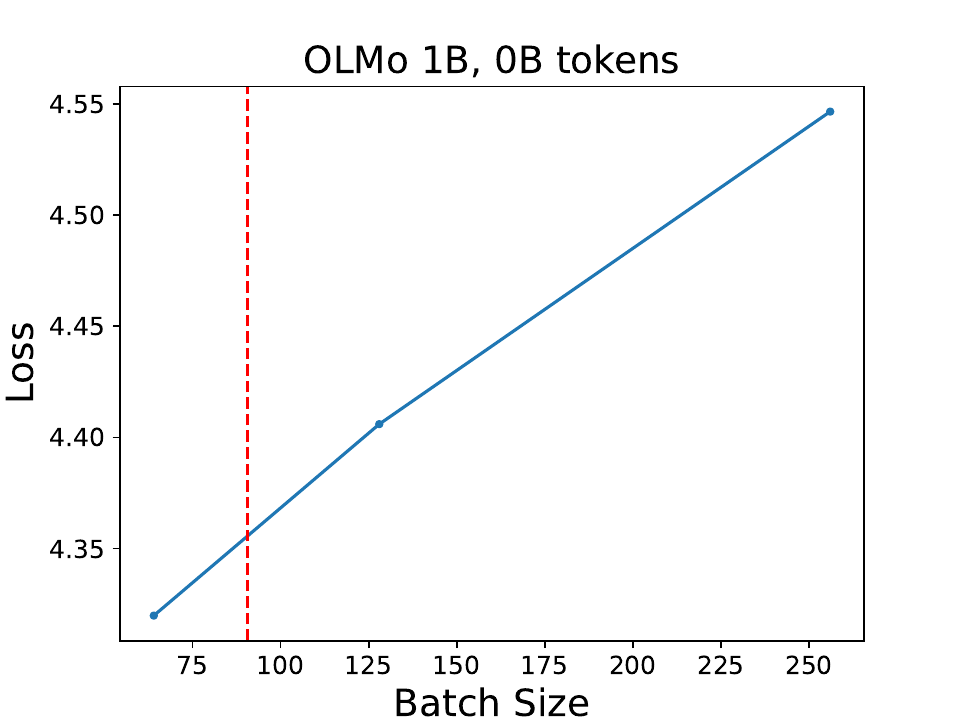}
    \includegraphics[width=0.32\linewidth]{figures/measure-cbs/peteish1-fixlr/from10000.pdf}
    \includegraphics[width=0.32\linewidth]{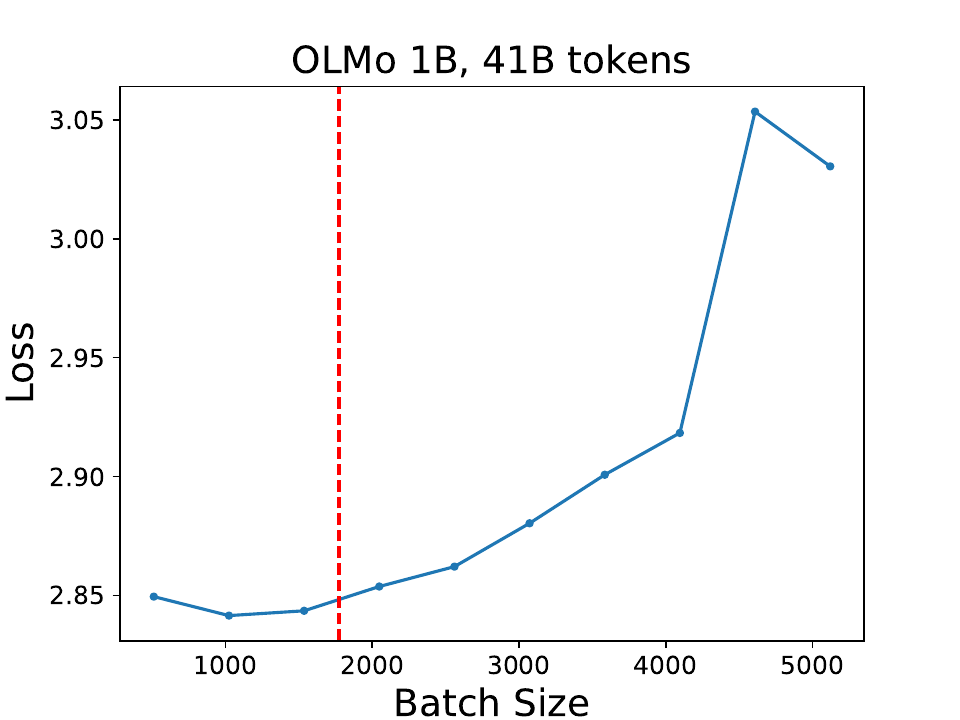}
    \includegraphics[width=0.32\linewidth]{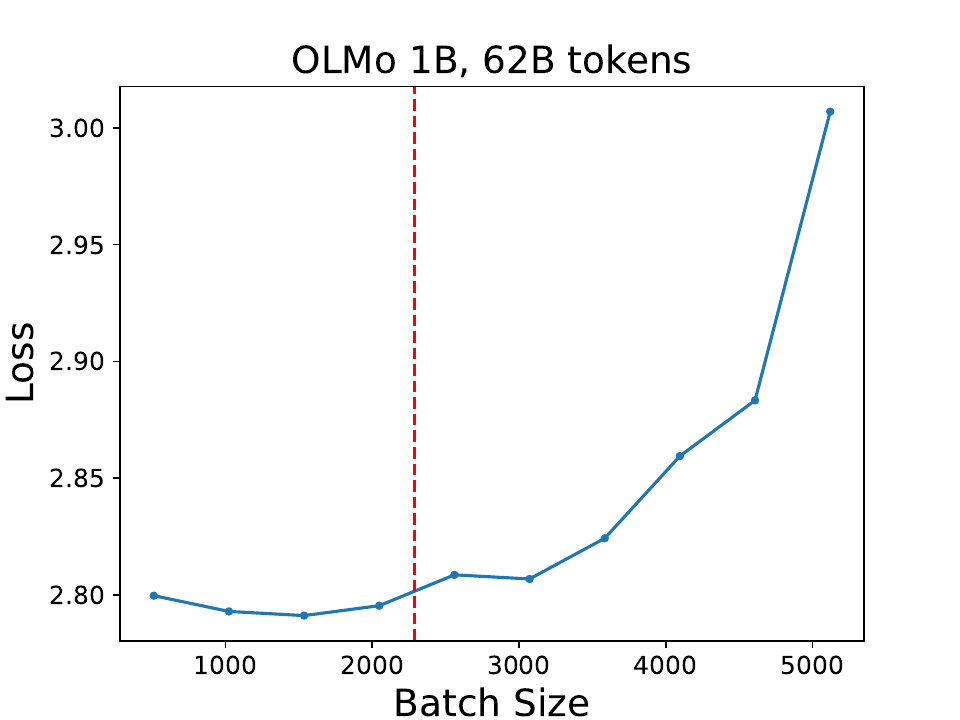}
    \includegraphics[width=0.32\linewidth]{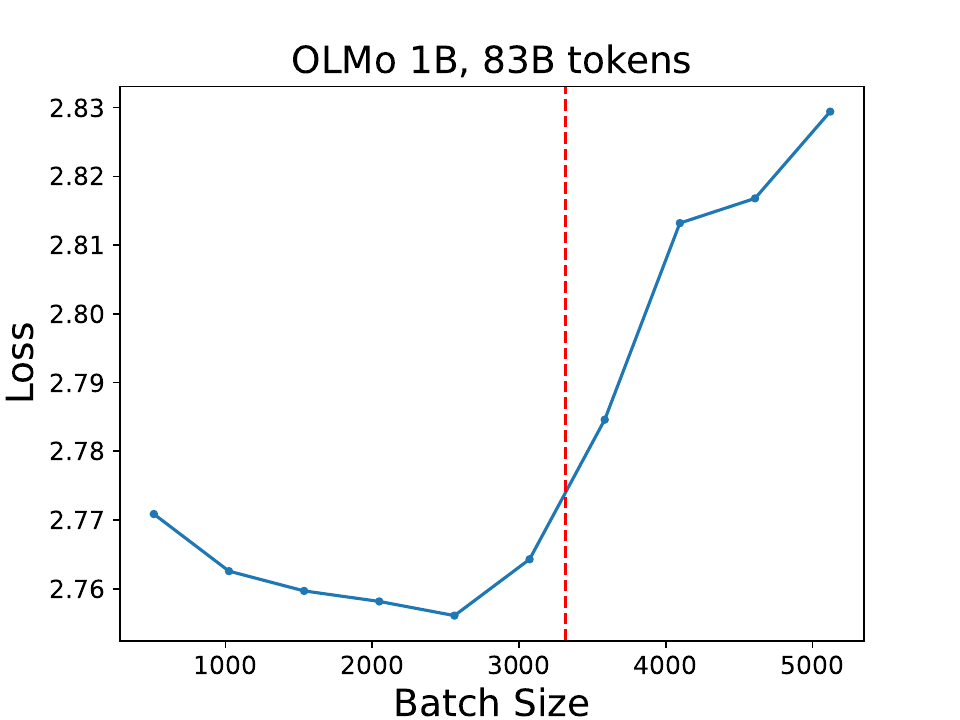}
    \includegraphics[width=0.32\linewidth]{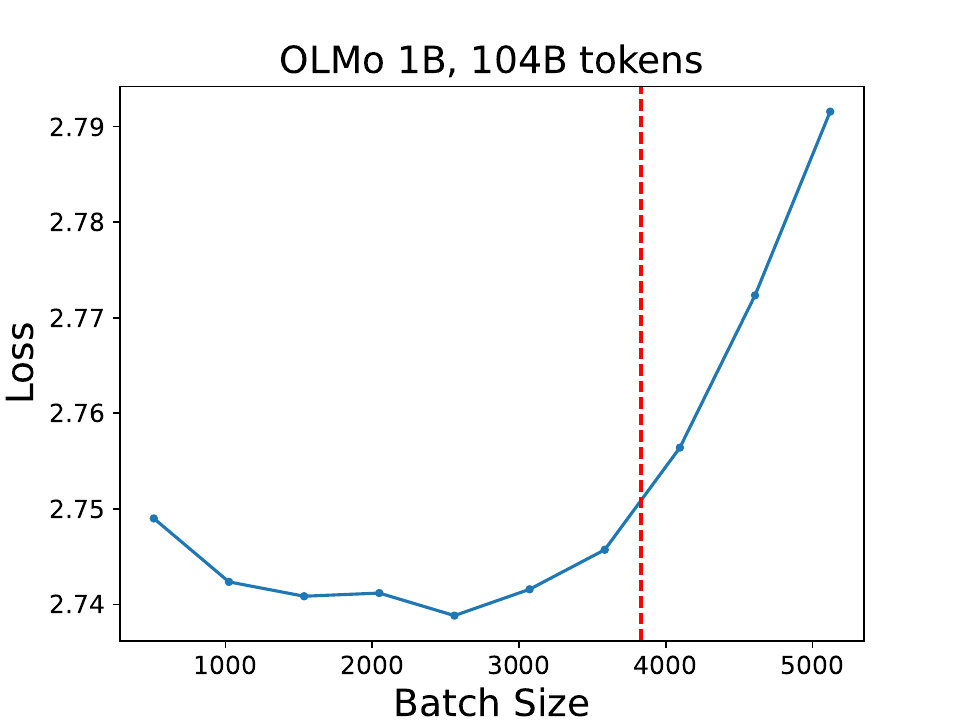}
    \includegraphics[width=0.32\linewidth]{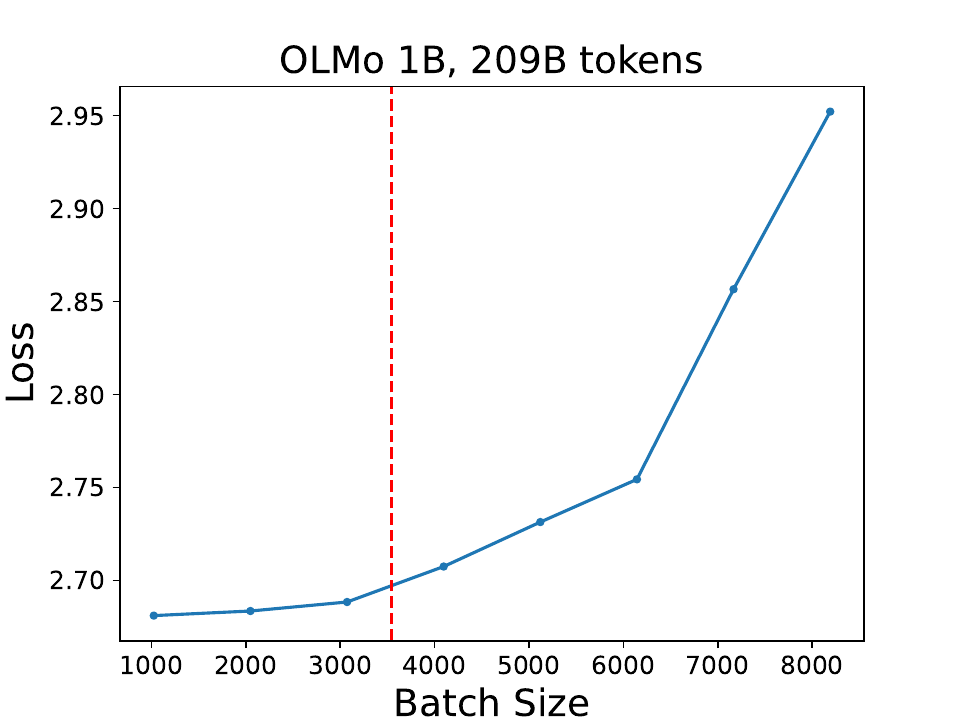}
    \includegraphics[width=0.32\linewidth]{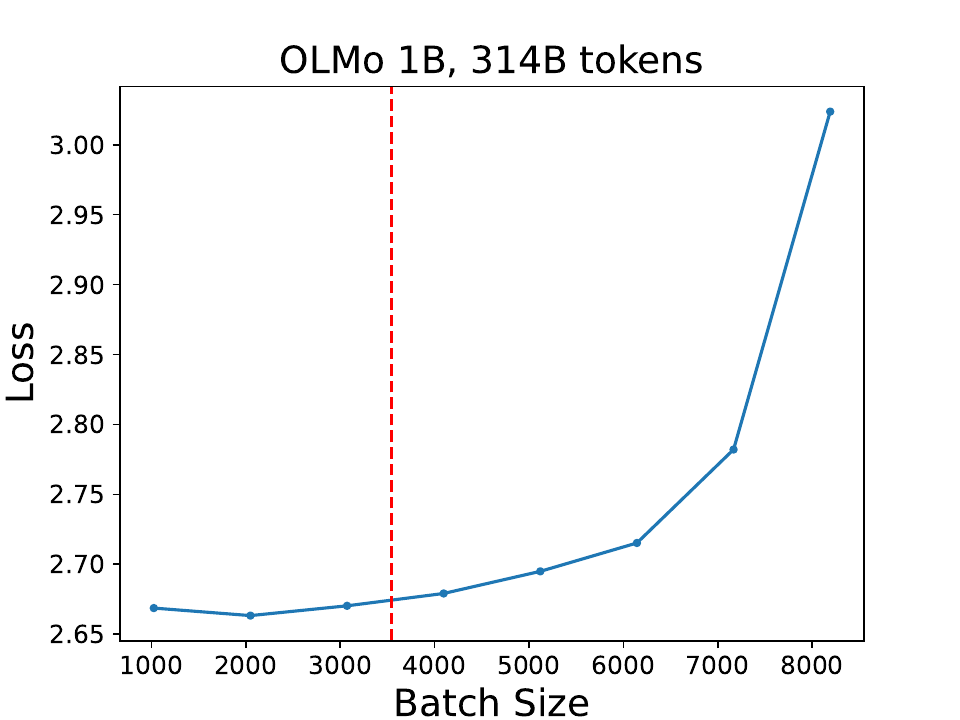}
    \includegraphics[width=0.32\linewidth]{figures/measure-cbs/peteish1-fixlr/from200000.pdf}
    \includegraphics[width=0.32\linewidth]{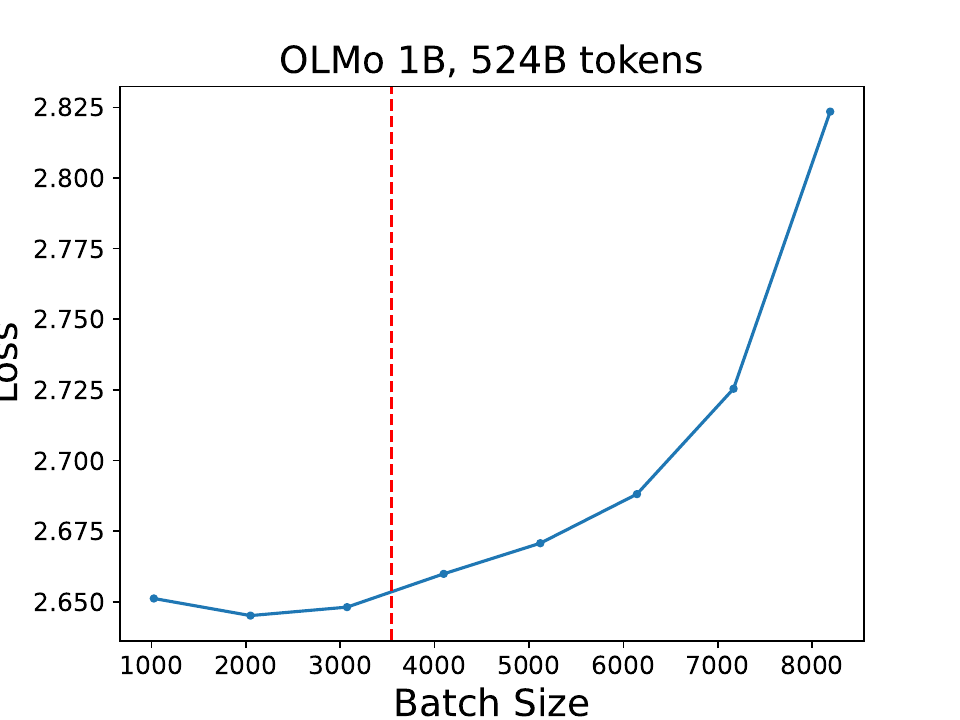}
    \includegraphics[width=0.32\linewidth]{figures/measure-cbs/peteish1-fixlr/from300000.pdf}
    \includegraphics[width=0.32\linewidth]{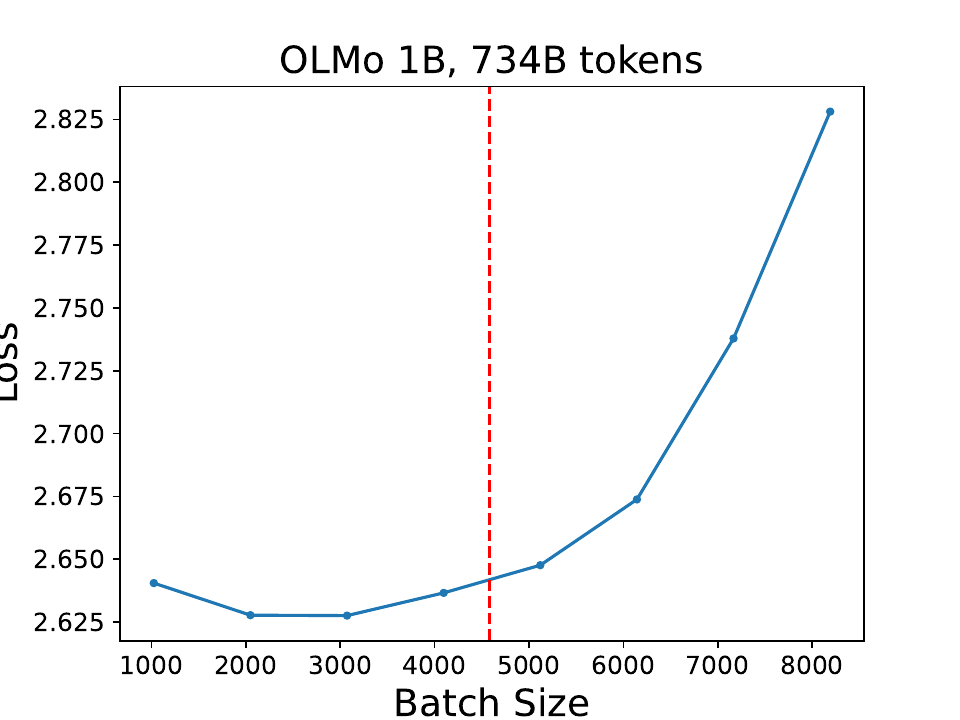}
    \includegraphics[width=0.32\linewidth]{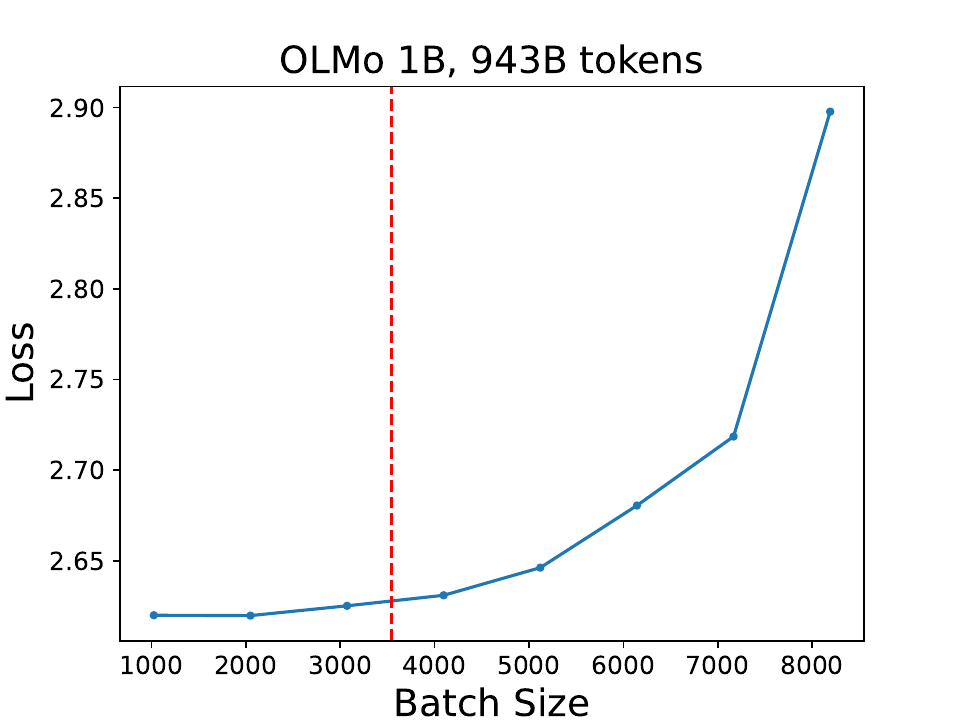}
    \includegraphics[width=0.32\linewidth]{figures/measure-cbs/peteish1-fixlr/from450000.pdf}
    \caption{All loss vs. batch size plots for OLMo 1B. Overall, the red line moves to the right over time, showing that the CBS increases.}
    \label{fig:all-losses-1b}
\end{figure}

\begin{figure}
    \centering
    \includegraphics[width=0.32\linewidth]{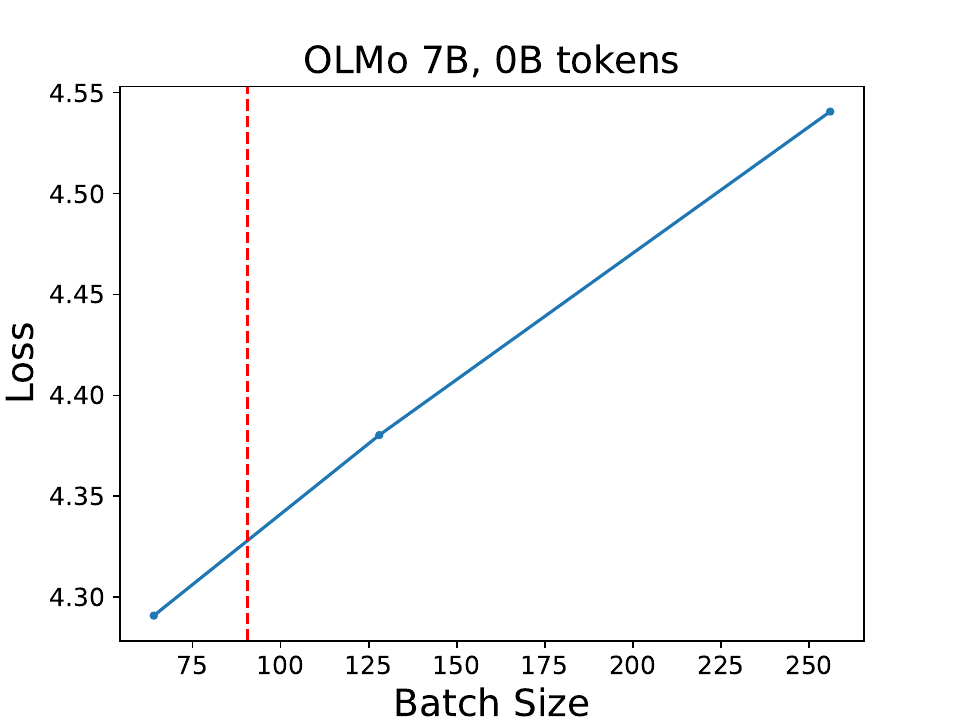}
    \includegraphics[width=0.32\linewidth]{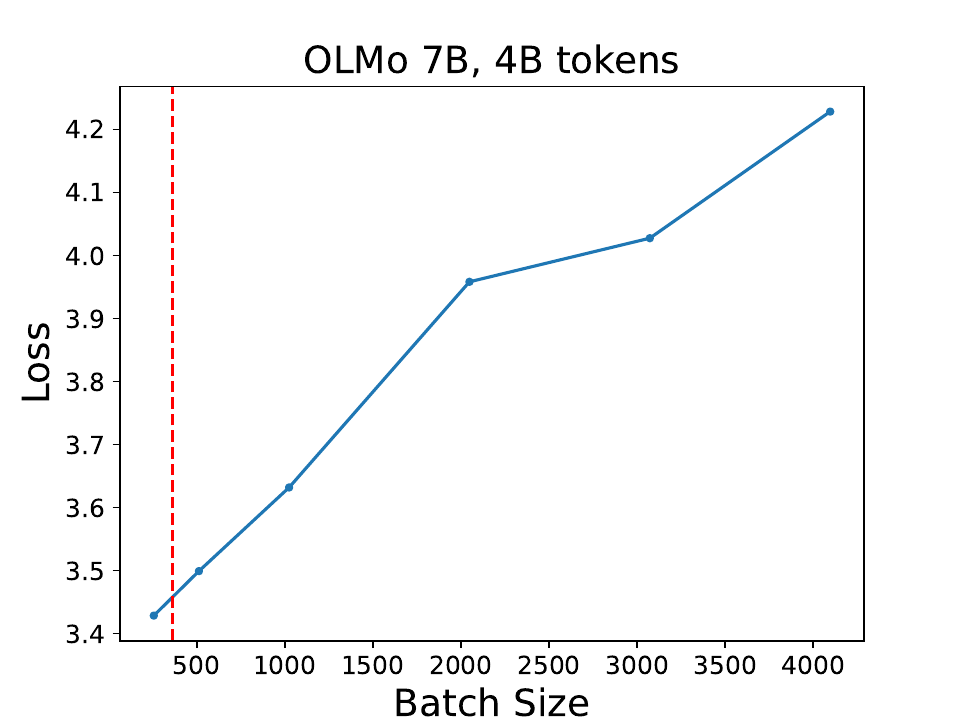}
    \includegraphics[width=0.32\linewidth]{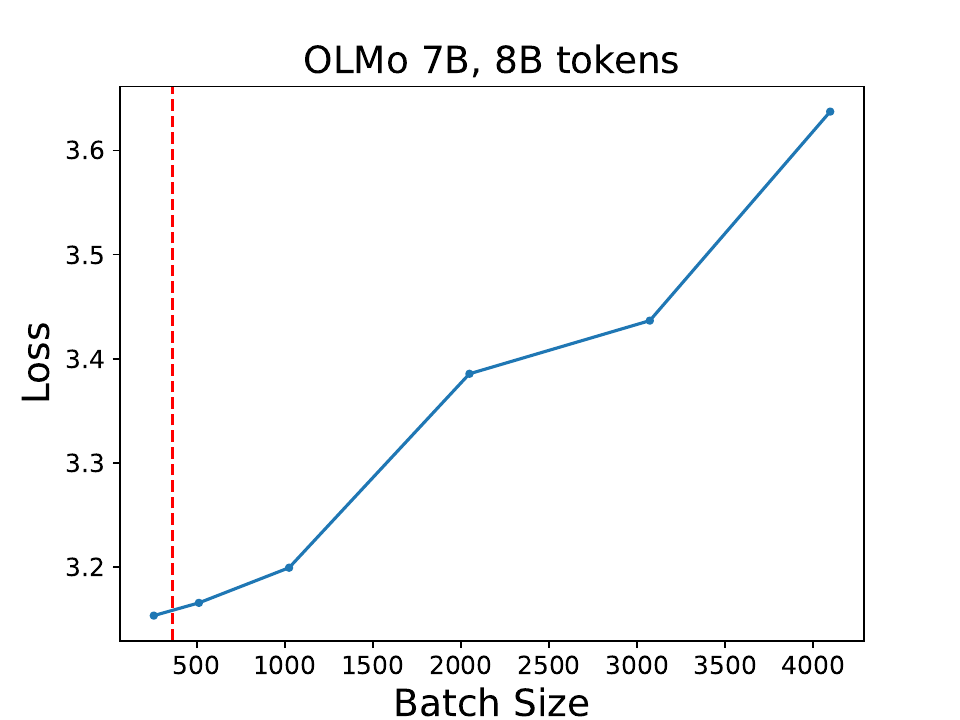}
    \includegraphics[width=0.32\linewidth]{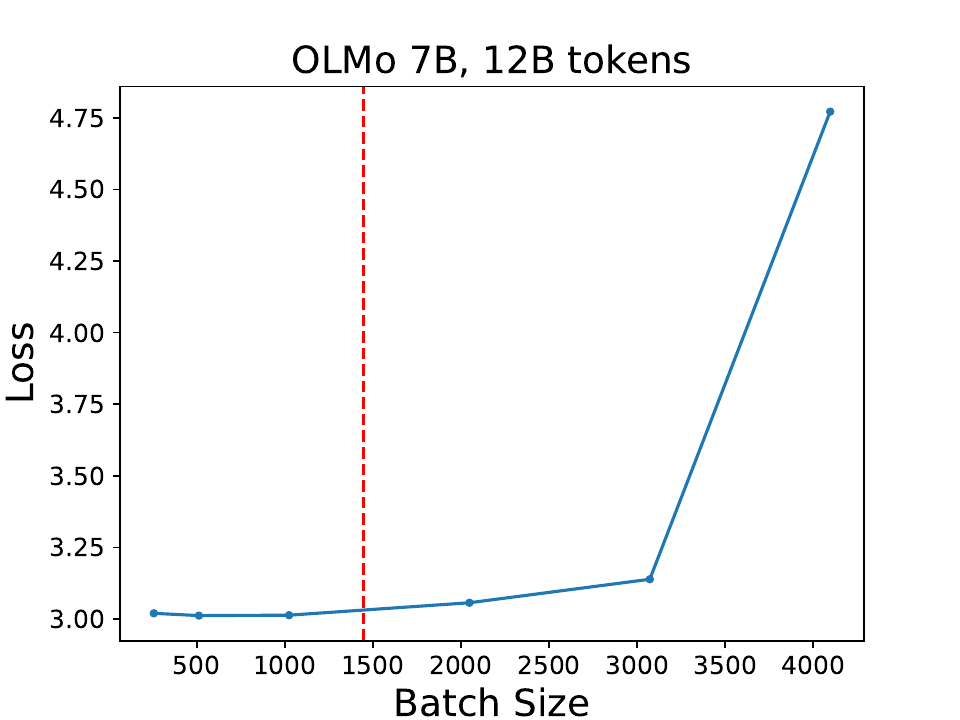}
    \includegraphics[width=0.32\linewidth]{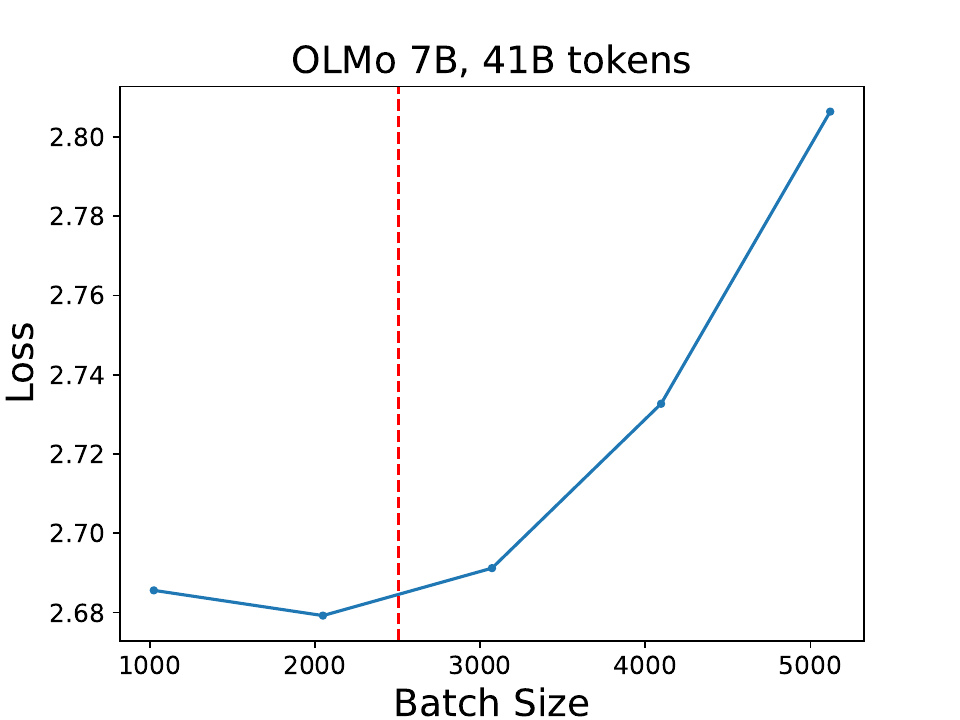}
    \includegraphics[width=0.32\linewidth]{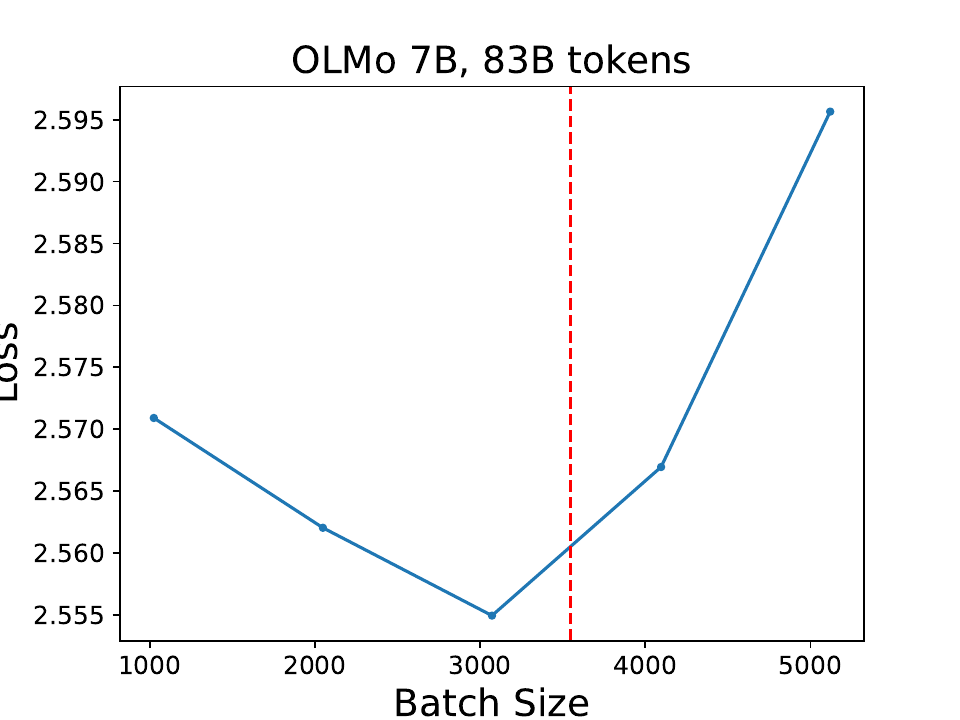}
    \includegraphics[width=0.32\linewidth]{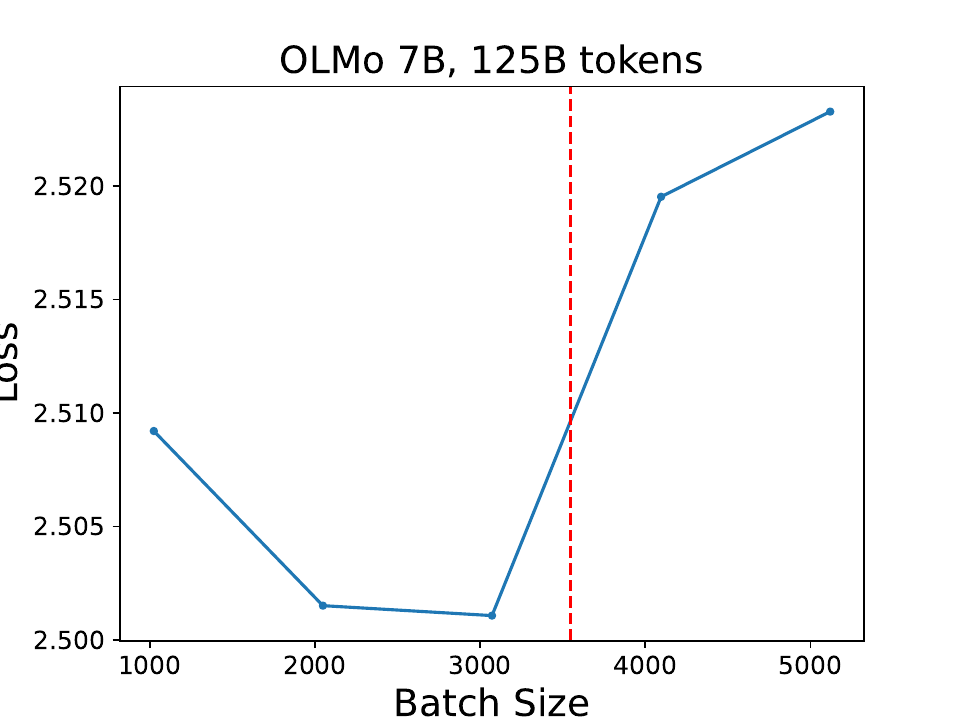}
    \includegraphics[width=0.32\linewidth]{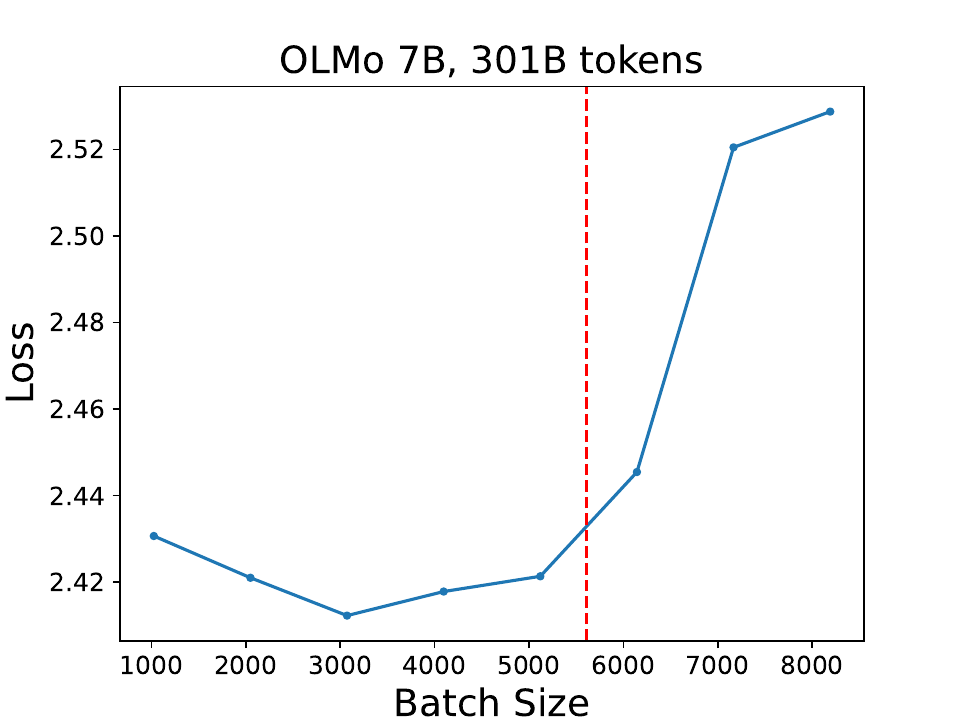}
    \includegraphics[width=0.32\linewidth]{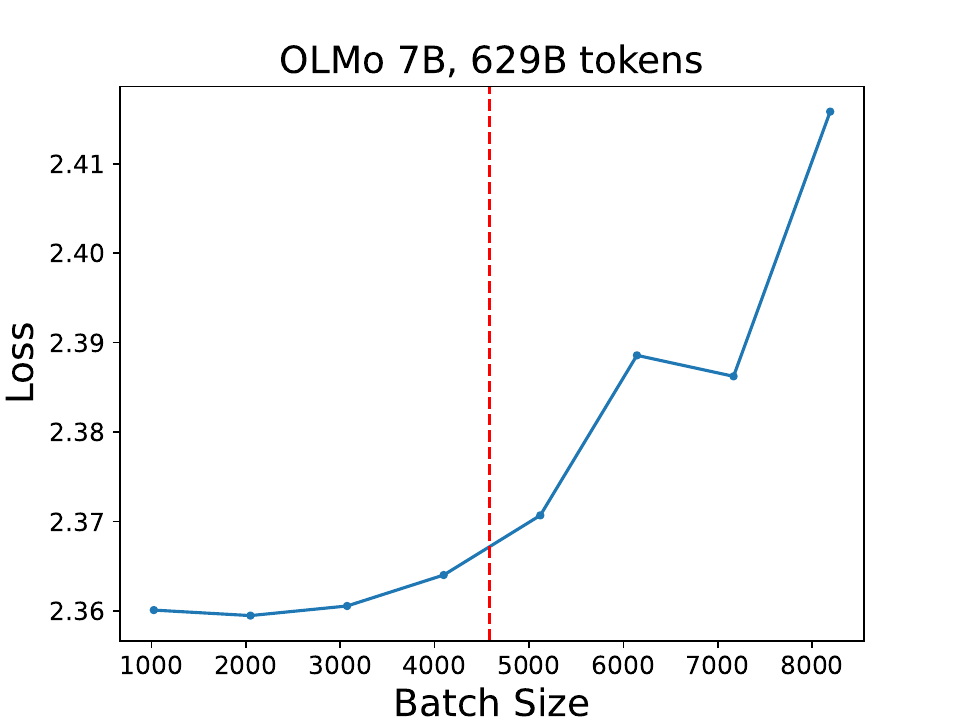}
    \includegraphics[width=0.32\linewidth]{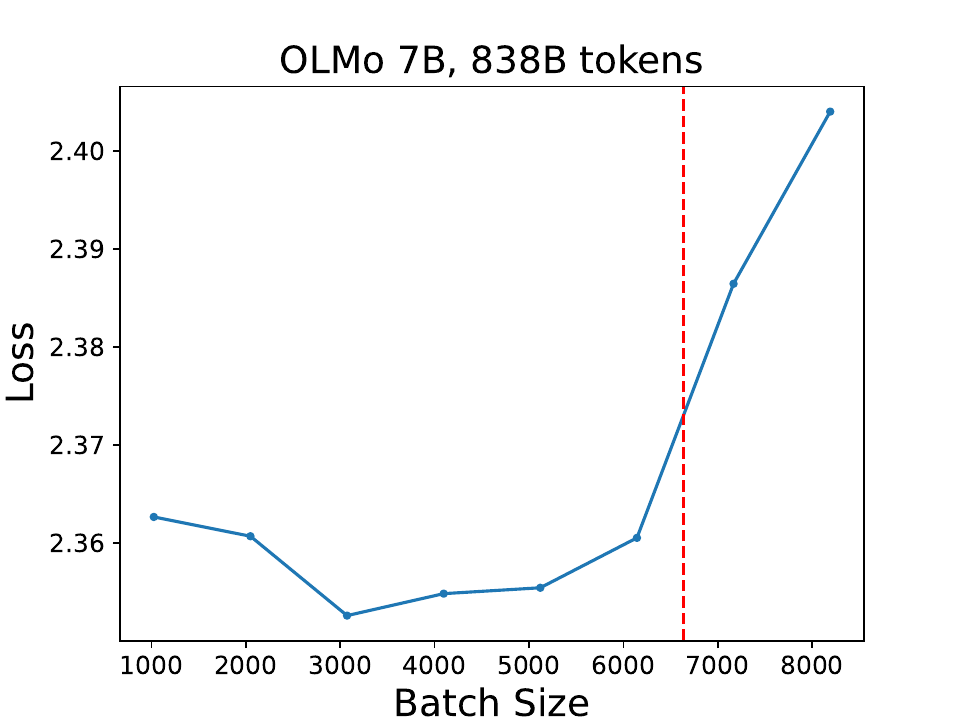}
    \includegraphics[width=0.32\linewidth]{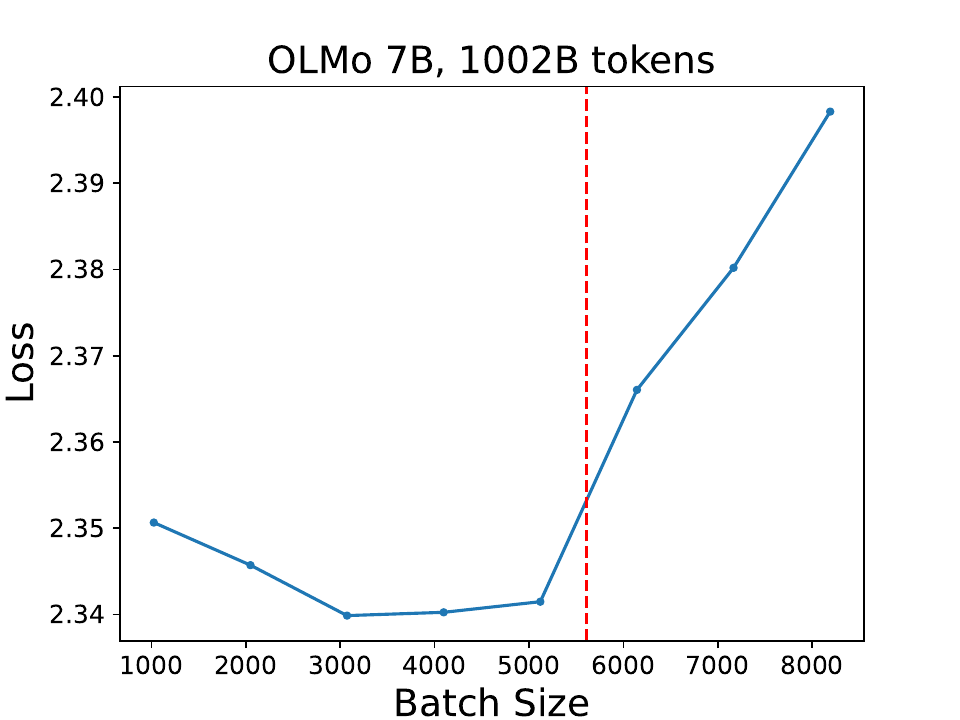}
    \includegraphics[width=0.32\linewidth]{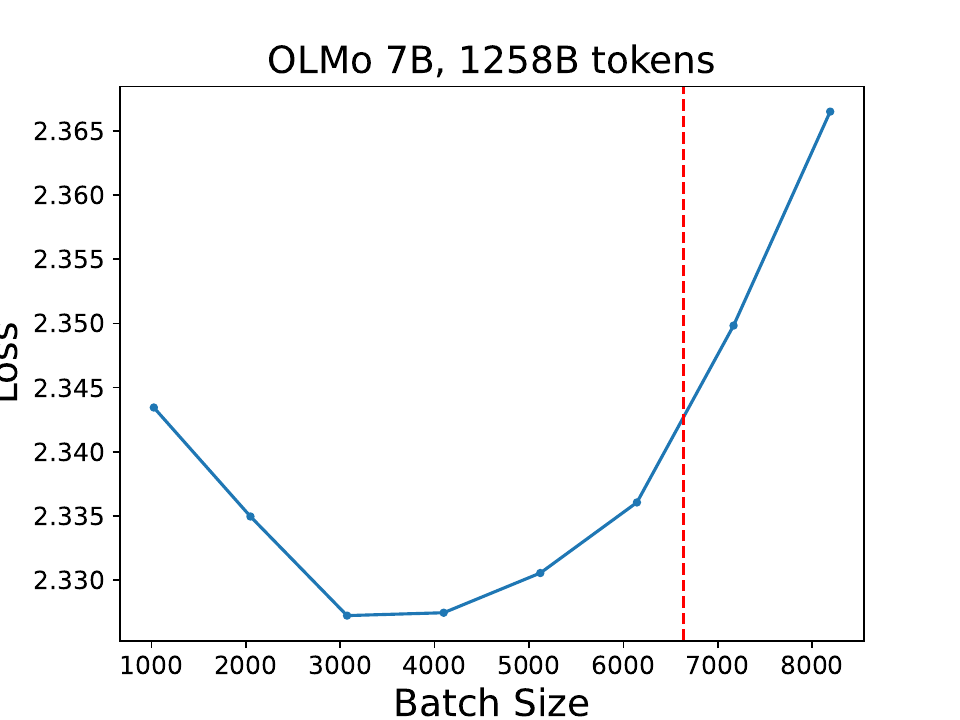}
    \includegraphics[width=0.32\linewidth]{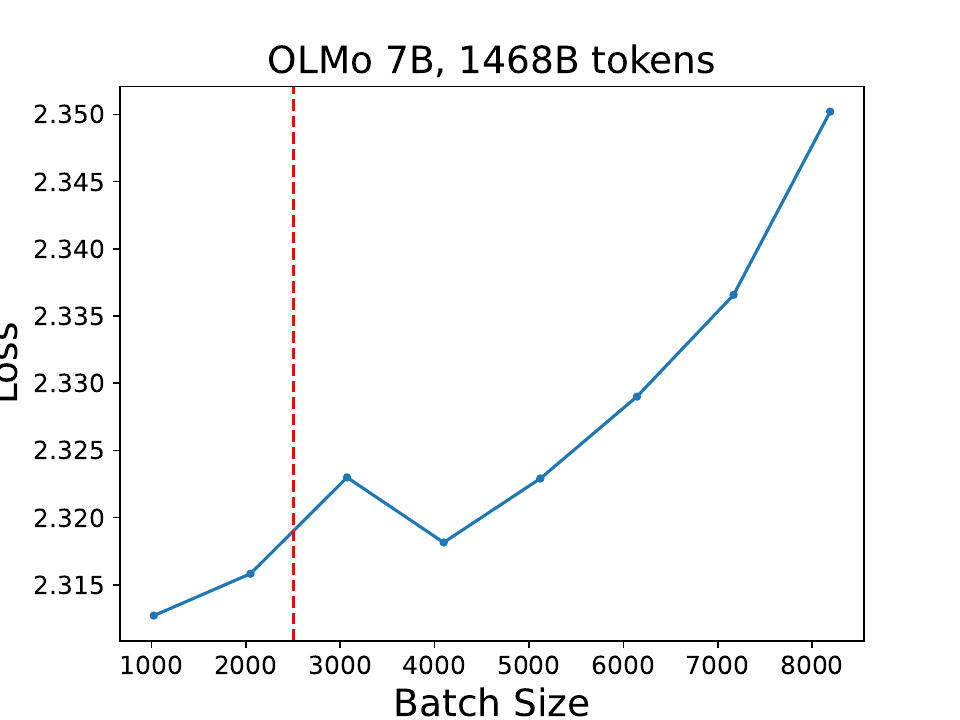}
    \includegraphics[width=0.32\linewidth]{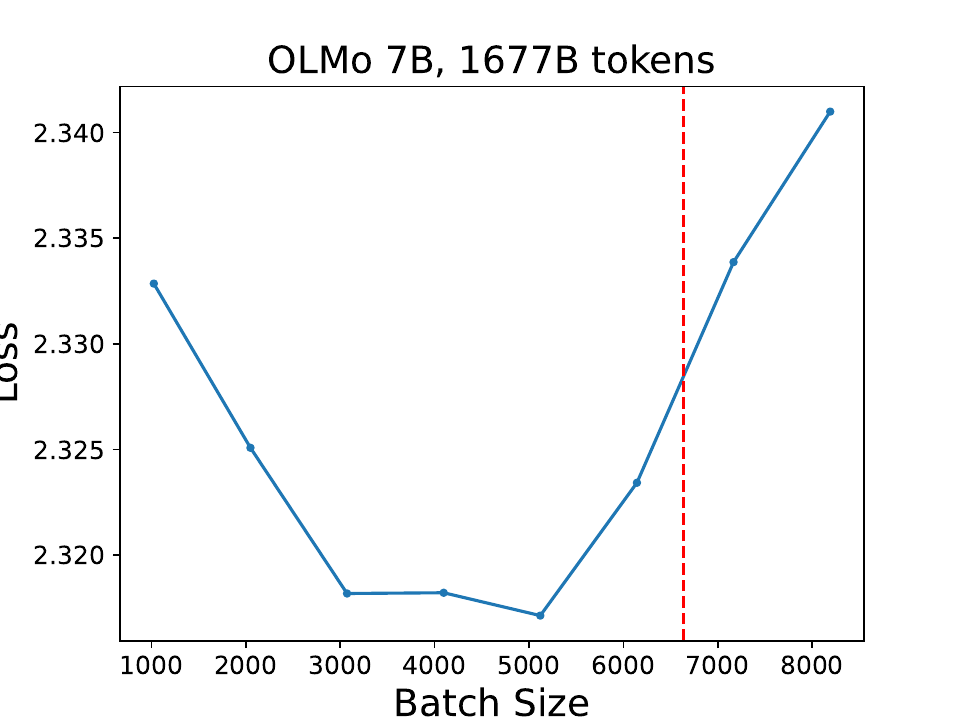}
    \includegraphics[width=0.32\linewidth]{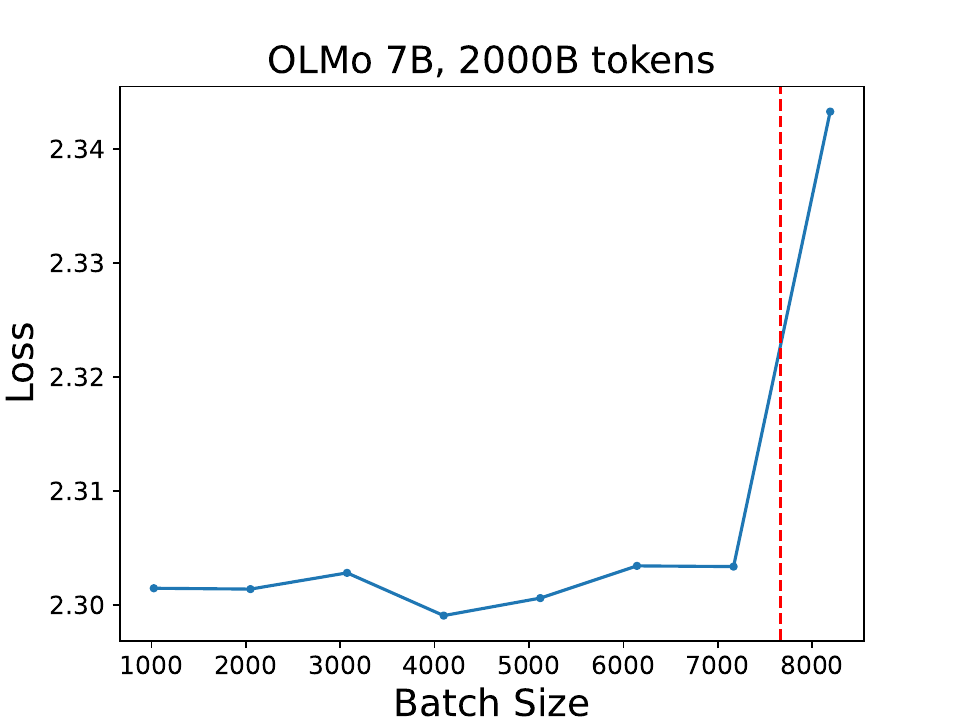}
    \label{fig:all-losses-7b}
\end{figure}

\section{Noise Scale Measurement Details} \label{app:noise-scale}

We use the gradient noise scale estimator proposed by \citet[Appendix A]{mccandlish2018empirical} to estimate the gradient noise scale.
The method estimates the gradient noise scale using gradient norms at two different batch sizes $\Bbig$ and $\Bsmall$ according to:
\begin{align*}
    \mathcal B_\mathrm{simple} &\approx \frac{\mathcal S}{\norm{\mathcal G}^2} , \textrm{where} \\
    \mathcal S &= \frac{\nGsmall - \nGbig}{1/\Bsmall - 1/\Bbig} \\
    \norm{\mathcal G}^2 &= \frac{\Bbig \nGbig - \Bsmall \nGsmall}{\Bbig - \Bsmall} . \\
\end{align*}
We use large batch size $\Bbig = 64$ and small batch size $\Bsmall = 1$.

It holds that $\mathbb{E} \left[ \mathcal S \right] = \mathrm{tr}(\Sigma)$ and $\mathbb{E} \left[ \norm{\mathcal G}^2 \right] = \norm{G}^2$. We thus average $\mathcal S$ and $\norm{\mathcal G}^2$ over 4096 batches reduce variance and then return their ratio as our estimate of the noise scale $\mathcal B_\mathrm{simple}$, using offline (i.e., unseen) data in each batch.

We estimate a confidence interval for $\mathcal B_\mathrm{simple}$ in two steps. First, we estimate 95\% confidence intervals for $\mathcal S$ and $\norm{\mathcal G}^2$, assuming the data are exponentially\footnote{For the exponential distribution, we use approximate confidence interval under ``Confidence Intervals'' here: \url{https://en.wikipedia.org/wiki/Exponential_distribution}.} and normally distributed, respectfully, based on manual inspection of their distributions (cf.~\Cref{fig:distributions}).
We denote these intervals $[a_{\mathcal S}, b_{\mathcal S}]$ and $[a_{\norm{\mathcal G}^2}, b_{\norm{\mathcal G}^2}]$, respectively.
We then define the confidence interval for $\mathcal B_\mathrm{simple}$ as follows:
\begin{equation*}
    \left[ \frac{a_{\mathcal S}}{b_{\norm{\mathcal G}^2}}, \frac{b_{\mathcal S}}{a_{\norm{\mathcal G}^2}} \right] .
\end{equation*}

If our estimates for $\mathcal S$ or $\norm{\mathcal G}^2$ (or their lower or upper bounds) come out negative, we consider them to be 0.

The checkpoints considered for OLMo 1B are steps 0, 10K, 20K, 40K, $\ldots$, 100K, 200K, $\ldots$ 400K.
For OLMo 7B, we use checkpoints at steps 0, 10K, $\ldots$, 40K, 60K, 70K, $\ldots$, 100K, 200K, $\ldots$ 400K.
The noise scale experiment for each checkpoint (for both the 1B and 7B models) was launched on a single GPU.

\begin{figure}
    \centering
    \includegraphics[width=0.48\linewidth]{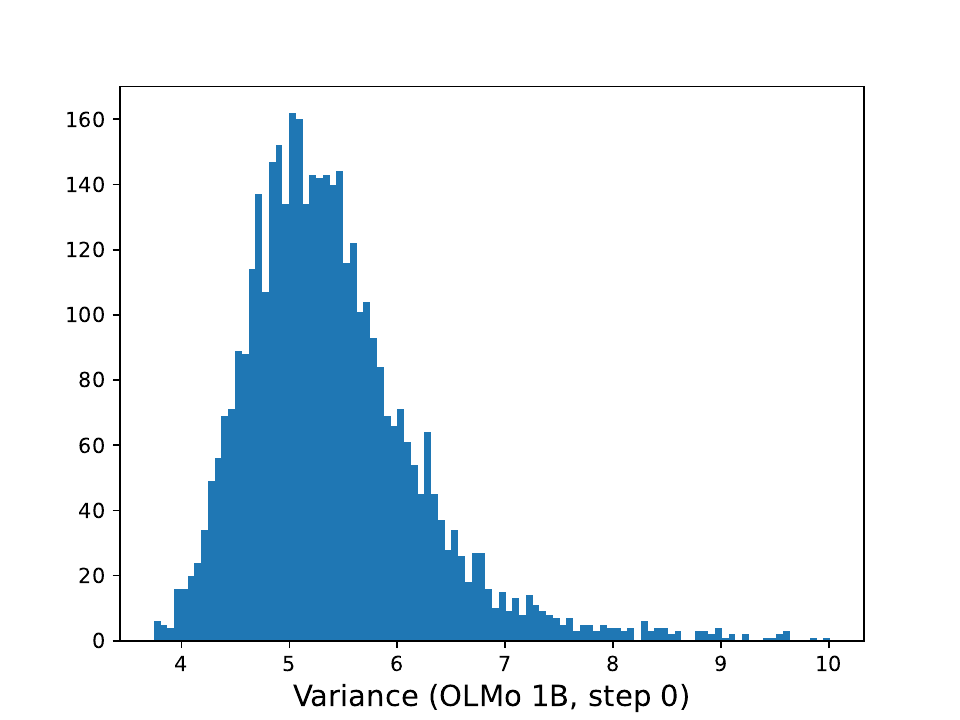}
    \includegraphics[width=0.48\linewidth]{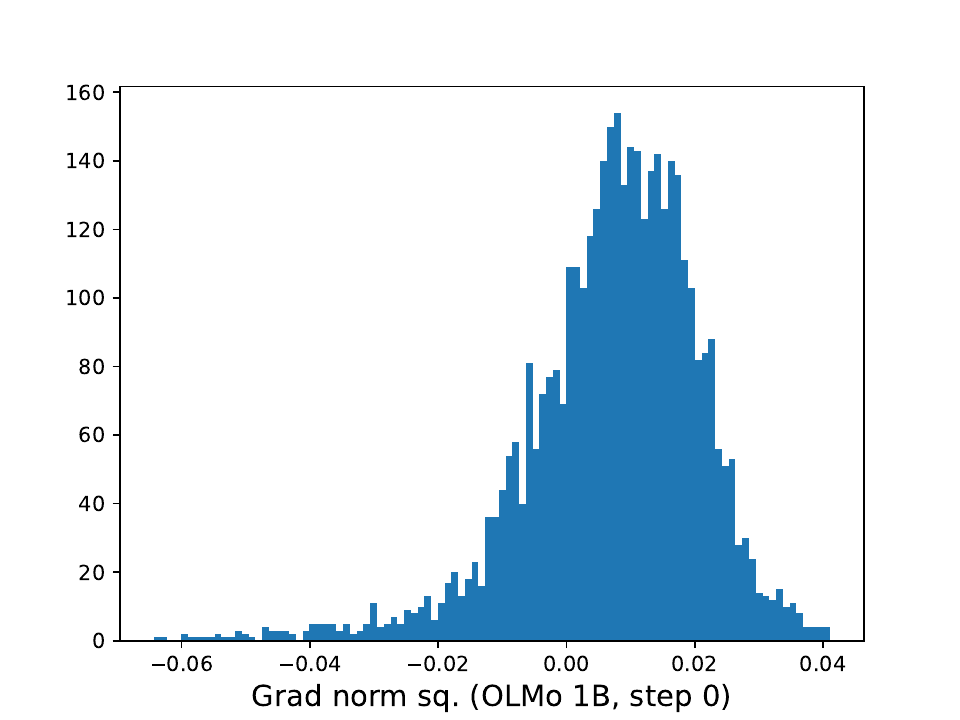}
    \caption{Representative histograms for $\mathcal S$ and $\norm{\mathcal G}^2$, showing data from the 1st to 99th percentiles. The distribution for $\mathcal S$ is positive, leading us to use an exponential distribution, while the fact that some samples of $\norm{\mathcal G}^2$ are negative motivates a normal distribution.}
    \label{fig:distributions}
\end{figure}

\section{License Information} \label{sec:other-details}

The OLMo models \citep{groeneveld-etal-2024-olmo,olmo20252olmo2furious} and pretraining code, which we use, are released under Apache-2.0 license.
C4 \citep{C4} is released under ODC-BY license.
The Pile \citep{Pile} is released under MIT license.

\section{Deriving CBS Scaling Laws: An Attempt} \label{sec:scaling-laws}

In this section, we explore whether our empirical fits for the critical batch size over training can be used to derive scaling laws for aggregate critical batch size that have been derived in prior work. These scaling laws assume we want to use a fixed batch size $B$ over training, and then train many different models to the same target loss. They then measure the critical $B^*$ up to which increases in batch size do not diminish token efficiency. The standard finding from such work is that CBS grows $\propto \sqrt{T}$, where $T$ is the total training budget in tokens. This is consistent with our finding that CBS increases over the course of training---moreover, we now seek to analyze whether this scaling law can be derived from our empirical measurements of local CBS. If so, this would provide converging evidence and a simpler method for fitting CBS scaling laws that only requires training a single model.

To begin, we assume that the goal of picking a fixed batch size $B$ is to minimize the L2 distance to the local CBS over the course of training.
It is not obvious that minimizing L2 distance is the right way to pick the fixed CBS: for instance, we might want to weight training at a batch size \emph{above} the local CBS more negatively than training below it.
Regardless, we will proceed for now under the assumption that this is the right perspective.
We also make the weaker assumption that $f(t) = 0$, in line with our empirical findings (\Cref{sec:our-method}).
It follows that the best batch size to train at (i.e., fixed CBS) is simply the average local CBS over training:

\begin{restatable}{proposition}{residuals} \label{thm:residuals}
    Let $f(t)$ be integrable with $f(0) = 0$ and define
    \begin{equation*}
        R_2 = \sqrt{\int_0^T \left( B - f(t) \right)^2 \mathrm{d}t} .
    \end{equation*}
    Then $R_2$ is minimized by $B^* = \frac{1}{T} \int_0^T f(t) \mathrm{d} t$.
\end{restatable}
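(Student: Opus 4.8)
The plan is to minimize $R_2$ by minimizing $R_2^2$, since squaring is monotone on the nonnegative reals, so the minimizer is the same. Write
\[
  R_2^2 = \int_0^T \bigl( B - f(t) \bigr)^2 \,\mathrm{d}t,
\]
treat this as a function of the scalar $B$, and expand the square:
\[
  R_2^2(B) = B^2 T - 2 B \int_0^T f(t)\,\mathrm{d}t + \int_0^T f(t)^2 \,\mathrm{d}t.
\]
This is a quadratic in $B$ with positive leading coefficient $T > 0$, hence strictly convex, so it has a unique global minimizer found by setting the derivative to zero.

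Next I would differentiate in $B$ (the integrand is smooth in $B$ and $f$ is integrable, so differentiation under the integral sign is justified; equivalently just differentiate the expanded quadratic directly), giving
\[
  \frac{\mathrm{d}}{\mathrm{d}B} R_2^2(B) = 2 B T - 2 \int_0^T f(t)\,\mathrm{d}t.
\]
Setting this to zero yields $B^* = \frac{1}{T}\int_0^T f(t)\,\mathrm{d}t$, which is exactly the claimed average of the local CBS. Convexity (or the positive second derivative $2T$) confirms this critical point is the minimizer rather than a maximizer or saddle.

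Honestly, there is no real obstacle here: this is the standard fact that the constant best approximating a function in $L^2$ is its mean, and the hypothesis $f(0) = 0$ plays no role in the argument (it is stated only because it reflects the empirical setup, matching the ``CBS starts near $0$'' finding). The only things worth being slightly careful about are (i) that $T > 0$ so the quadratic is genuinely strictly convex and the minimizer is unique, and (ii) that all the integrals appearing are finite — integrability of $f$ gives finiteness of $\int_0^T f$, and for the minimization one can either additionally assume $f \in L^2[0,T]$ or simply note that the $\int_0^T f(t)^2\,\mathrm{d}t$ term is independent of $B$ and hence irrelevant to locating the minimizer even if it were infinite. A clean alternative phrasing, if preferred, is to complete the square: $R_2^2(B) = T\bigl(B - \bar f\bigr)^2 + \bigl(\int_0^T f^2 - T \bar f^2\bigr)$ where $\bar f = \frac{1}{T}\int_0^T f$, making the minimizer $B^* = \bar f$ immediate since the bracketed term does not depend on $B$.
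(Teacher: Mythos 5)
Your proof is correct and takes essentially the same route as the paper's: expand $R_2^2$ as a quadratic in $B$, differentiate, set the derivative to zero, and use the positive second derivative $2T$ to confirm a minimum. Your extra remarks (that $f(0)=0$ is unused, the integrability caveat, and the completing-the-square variant) are sound refinements but do not change the argument.
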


\begin{proof}
    We can first simplify the expression for $(R_2)^2$:
    \begin{align*}
        (R_2)^2
        &= \int_0^T \left( B - f(t) \right)^2 \mathrm{d}t \\
        &= \int_0^T \left( B^2 - 2Bf(t) + f(t)^2 \right) \mathrm{d}t \\
        &= B^2T - \int_0^T \left( 2Bf(t) - f(t)^2 \right) \mathrm{d}t .
    \end{align*}
    Now, taking the derivative with respect to $B$, we get
    \begin{equation*}
        \frac{\mathrm{d}}{\mathrm{d}B} (R_2)^2
        = 2BT - 2\int_0^T f(t) \mathrm{d} t .
    \end{equation*}
    Note that the second derivative $2T$ is positive.
    Thus, setting the derivative to $0$ and solving for $B$, we conclude that the following value of $B$ minimizes $R_2$:
    \begin{equation*}
        B = \frac{1}{T} \int_0^T f(t) \mathrm{d} t . \qedhere
    \end{equation*}
\end{proof}


Thus, under the assumptions we have made, if we are trying to pick a fixed batch size that best approximates the local CBS throughout training, we can simply pick the average CBS over training.
We can use \Cref{thm:residuals} to derive a scaling law for the fixed $B^*$ as a function of the final CBS or, equivalently, the total steps $T$.
We now consider various reasonable functional forms $f(t)$ for the CBS.

\subsection{Power Law CBS Scaling}

We first consider the prediction for the fixed CBS scaling law if the local CBS evolves as a power law.

\begin{restatable}[$B^*$ for power-law CBS]{proposition}{powerlaw}
    Let $f(t) = t^c$ for $c > 0$. Then the fixed CBS is
    \begin{equation*}
        B^* = \frac{1}{c+1} T^c .
    \end{equation*}
\end{restatable}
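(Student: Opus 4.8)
The plan is to apply \Cref{thm:residuals} directly with the specific choice $f(t) = t^c$. Since the proposition we want to prove is just the specialization of \Cref{thm:residuals} to a power-law CBS, the only work is evaluating the integral $\frac{1}{T}\int_0^T f(t)\,\mathrm{d}t$ for $f(t) = t^c$.

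First I would check that the hypotheses of \Cref{thm:residuals} are satisfied: the function $f(t) = t^c$ is integrable on $[0,T]$ for any $c > 0$, and $f(0) = 0^c = 0$, so the proposition applies. Then I would compute
\begin{equation*}
    B^* = \frac{1}{T}\int_0^T t^c\,\mathrm{d}t = \frac{1}{T}\cdot\frac{T^{c+1}}{c+1} = \frac{1}{c+1}T^c ,
\end{equation*}
using the elementary antiderivative $\int t^c\,\mathrm{d}t = \frac{t^{c+1}}{c+1}$, which is valid since $c > 0$ implies $c + 1 \neq 0$ and the integrand is continuous on $[0,T]$. This yields exactly the claimed formula.

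There is essentially no obstacle here — the statement is a one-line corollary of \Cref{thm:residuals} combined with a standard power-rule integration. The only thing to be slightly careful about is confirming the boundary term at $t = 0$ vanishes (it does, since $c + 1 > 1 > 0$) and noting the formula is self-consistent: at $t = T$ the local CBS is $f(T) = T^c$, so the fixed CBS $B^* = \frac{1}{c+1}T^c$ is a fraction $\frac{1}{c+1} < 1$ of the final local CBS, as one would expect from averaging a monotonically increasing function. If desired, one could also remark that the special case $c = 1/2$ — corresponding to the $\sqrt{T}$ scaling reported in prior work — gives $B^* = \frac{2}{3}T^{1/2}$, connecting the calculation back to the surrounding discussion, though that remark is optional and not part of the formal proof.
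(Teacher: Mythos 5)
Your proposal is correct and matches the paper's proof, which likewise just plugs $f(t) = t^c$ into \Cref{thm:residuals} and evaluates $\frac{1}{T}\int_0^T t^c\,\mathrm{d}t = \frac{T^c}{c+1}$ by the power rule. Your additional checks of the hypotheses ($f(0)=0$, integrability) and the remark on the $c = 1/2$ case are consistent with the paper's surrounding discussion.
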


\begin{proof}
    Plug in and solve the integral:
    \begin{align*}
        B
        &= \frac{1}{T} \int_0^T t^c \mathrm{d}t \\
        &= \frac{1}{T} \cdot \bigg[\frac{t^{c+1}}{c+1} \bigg]^T_0 \\
        &= \frac{T^c}{c+1} . \qedhere
    \end{align*}
\end{proof}

In the case where $c=1/2$ (square root), $B^* = \frac{2}{3} B^*_T = \frac{2}{3} \sqrt{T}$, which derives the $\sqrt{T}$ scaling law proposed by prior work \citep{zhang2024cbs}.

\subsection{Logarithmic CBS Scaling}

\begin{restatable}[$B^*$ for log CBS]{proposition}{logcbs}
    Let $f(t) = \log (t + 1)$. Then the fixed CBS is
    \begin{equation*}
        B^* = \frac{T}{T+1} \log (T + 1) - 1 .
    \end{equation*}
\end{restatable}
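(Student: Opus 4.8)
The plan is to apply \Cref{thm:residuals} to collapse the claim to a single definite integral, then evaluate it. The function $f(t) = \log(t+1)$ is continuous on $[0,T]$ (hence integrable) and satisfies $f(0) = \log 1 = 0$, so the hypotheses of \Cref{thm:residuals} hold and the minimizing fixed batch size is the time average $B^* = \frac{1}{T}\int_0^T \log(t+1)\,\mathrm{d}t$. All that remains is to compute $\int_0^T \log(t+1)\,\mathrm{d}t$.

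First I would find an antiderivative by integration by parts with $u = \log(t+1)$ and $\mathrm{d}v = \mathrm{d}t$, giving $\int \log(t+1)\,\mathrm{d}t = t\log(t+1) - \int \frac{t}{t+1}\,\mathrm{d}t$. The leftover integrand simplifies via $\frac{t}{t+1} = 1 - \frac{1}{t+1}$, so the antiderivative is $(t+1)\log(t+1) - t$; equivalently, the substitution $u = t+1$ reduces everything to the standard $\int \log u\,\mathrm{d}u = u\log u - u$. Evaluating from $0$ to $T$ and using $\log 1 = 0$ yields $\int_0^T \log(t+1)\,\mathrm{d}t = (T+1)\log(T+1) - T$, and dividing by $T$ gives $B^* = \frac{T+1}{T}\log(T+1) - 1$.

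There is no genuine obstacle: every step is an elementary one-variable integral, the only mild subtlety being the cancellation $\frac{t}{t+1} = 1 - \frac{1}{t+1}$ that keeps the antiderivative in closed form. The main thing I would flag, rather than a difficulty, is a discrepancy with the statement as printed: the computation produces the coefficient $\frac{T+1}{T}$, not the $\frac{T}{T+1}$ appearing in the proposition. A quick sanity check confirms $\frac{T+1}{T}$ is the correct one. Since $\log(t+1) \ge 0$ on $[0,T]$, its average $B^*$ must be nonnegative for every $T > 0$; yet the printed expression $\frac{T}{T+1}\log(T+1) - 1$ tends to $-1$ as $T \to 0^+$ (because $\frac{T}{T+1}\log(T+1) \to 0$), which is impossible for an average of a nonnegative integrand, whereas $\frac{T+1}{T}\log(T+1) - 1 \to 0$ as it should. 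I therefore read $\frac{T}{T+1}$ as a typo for $\frac{T+1}{T}$ and prove the corrected identity $B^* = \frac{T+1}{T}\log(T+1) - 1$.
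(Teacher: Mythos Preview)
Your approach is essentially identical to the paper's: invoke \Cref{thm:residuals} to reduce to $\frac{1}{T}\int_0^T \log(t+1)\,\mathrm{d}t$, use the antiderivative $(t+1)\log(t+1)-t$, and evaluate. You are also right about the typo: the paper's own proof carries the same antiderivative through to $\frac{1}{T}\big[(t+1)\log(t+1)-t\big]_0^T$ but then writes $\frac{T}{T+1}\log(T+1)-1$ in the final line, an arithmetic slip that your sign/limit sanity check correctly diagnoses as wrong; the correct value is indeed $\frac{T+1}{T}\log(T+1)-1$.
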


\begin{proof}
    Plug in and solve the integral:
    \begin{align*}
        B
        &= \frac{1}{T} \int_0^T \log (t + 1) \mathrm{d}t \\
        &= \frac{1}{T} \cdot \bigg[\left((t + 1) \log (t+1) - t \right) \bigg]^T_0 \\
        &= \frac{T}{T + 1} \log (T + 1) - 1 . \qedhere
    \end{align*}
\end{proof}

Thus, for large $T$, the fixed CBS will scale as $B^* \approx \log T$.

\subsection{Discussion}

These results show that, if we are choosing the fixed batch size to minimize average distance to the CBS as it evolves over training, we should pick it, more or less, as a simple function that slightly discounts the final CBS. Specifically, if we believe that the local CBS grows as $\sqrt{T}$ during training, then this derives the $\sqrt{T}$ scaling law for $B^*$ proposed in prior work.

One limitation of this view is that the L2 residuals may not be the right way to measure closeness to the CBS. In particular, it may be worse to overestimate the CBS compared to underestimate, as training above the CBS (with a scaled up learning rate) can be unstable.
We thus do not read to much into this analysis, but view it as a potentially useful starting point for future empirical and theoretical that derives CBS scaling laws from the development of the local CBS over training.

\section{BPB Evaluation on Downstream Tasks} \label{app:bpb}

This section lists the datasets we used to compute BPB measures for downstream tasks.
For multiple-choice tasks, we use the Cloze/Completion Formulation (CF), and compute the BPB metric on the gold
answer.
For completion tasks, we simply compute BPB over the correct answer.
This approach was inspired by~\citet{Bhagia2024EstablishingTS}.
The selection of tasks follows the guidelines from~\citet{DataDecide}.

\begin{table*}[t]
  \centering

    \begin{small}
    \begin{tabular}{llllll}
    \toprule
    \bf{task} & \bf{split} & \bf{\# shots} & \bf{reference}\\
    \midrule
    ARC-Challenge & Test & 5 & \citep{clark2018think} \\
    ARC-Easy & Test & 5 & \citep{clark2018think}\\
    CommonsenseQA & Val & 5 & \citep{talmor-etal-2019-commonsenseqa} \\
    HellaSwag & Val & 5 & \citep{zellers-etal-2019-hellaswag}\\
    MMLU & Val and Test & 5 & \citep{hendryckstest2021}\\
    PIQA & Val & 5 & \citep{Bisk_Zellers_Le_bras_Gao_Choi_2020}\\
    Social IQa & Val & 5 & \citep{sap-etal-2019-social}\\
    WinoGrande & Val & 5 & \citep{Sakaguchi_Le_Bras_Bhagavatula_Choi_2020}\\
    GSM8K & Gold & 5 & \citep{cobbe2021training}\\
    Minerva & Gold & 0 & \citep{lewkowycz2022solving}\\
    Humaneval & Gold & 0 & \citep{chen2021evaluating}\\
    MBPP & Gold & 0 & \citep{austin2021program}\\
    Copycolors 10-way & & 0 & \citep{Wiegreffe2024AnswerAA}\\
    \bottomrule
    \end{tabular}
    \end{small}
\end{table*}

\clearpage

\newpage
\section*{NeurIPS Paper Checklist}

\begin{enumerate}

\item {\bf Claims}
    \item[] Question: Do the main claims made in the abstract and introduction accurately reflect the paper's contributions and scope?
    \item[] Answer: \answerYes{} 
    \item[] Justification: CBS measurements in \Cref{sec:our-method} and batch size warmup tested in \Cref{sec:batch-size-warmup}.
    \item[] Guidelines:
    \begin{itemize}
        \item The answer NA means that the abstract and introduction do not include the claims made in the paper.
        \item The abstract and/or introduction should clearly state the claims made, including the contributions made in the paper and important assumptions and limitations. A No or NA answer to this question will not be perceived well by the reviewers. 
        \item The claims made should match theoretical and experimental results, and reflect how much the results can be expected to generalize to other settings. 
        \item It is fine to include aspirational goals as motivation as long as it is clear that these goals are not attained by the paper. 
    \end{itemize}

\item {\bf Limitations}
    \item[] Question: Does the paper discuss the limitations of the work performed by the authors?
    \item[] Answer: \answerYes{} 
    \item[] Justification: Assumptions and limitations of our CBS method are discussed in \Cref{sec:our-method}.
    \item[] Guidelines:
    \begin{itemize}
        \item The answer NA means that the paper has no limitation while the answer No means that the paper has limitations, but those are not discussed in the paper. 
        \item The authors are encouraged to create a separate "Limitations" section in their paper.
        \item The paper should point out any strong assumptions and how robust the results are to violations of these assumptions (e.g., independence assumptions, noiseless settings, model well-specification, asymptotic approximations only holding locally). The authors should reflect on how these assumptions might be violated in practice and what the implications would be.
        \item The authors should reflect on the scope of the claims made, e.g., if the approach was only tested on a few datasets or with a few runs. In general, empirical results often depend on implicit assumptions, which should be articulated.
        \item The authors should reflect on the factors that influence the performance of the approach. For example, a facial recognition algorithm may perform poorly when image resolution is low or images are taken in low lighting. Or a speech-to-text system might not be used reliably to provide closed captions for online lectures because it fails to handle technical jargon.
        \item The authors should discuss the computational efficiency of the proposed algorithms and how they scale with dataset size.
        \item If applicable, the authors should discuss possible limitations of their approach to address problems of privacy and fairness.
        \item While the authors might fear that complete honesty about limitations might be used by reviewers as grounds for rejection, a worse outcome might be that reviewers discover limitations that aren't acknowledged in the paper. The authors should use their best judgment and recognize that individual actions in favor of transparency play an important role in developing norms that preserve the integrity of the community. Reviewers will be specifically instructed to not penalize honesty concerning limitations.
    \end{itemize}

\item {\bf Theory Assumptions and Proofs}
    \item[] Question: For each theoretical result, does the paper provide the full set of assumptions and a complete (and correct) proof?
    \item[] Answer: \answerYes{} 
    \item[] Justification: Our (minor) theoretical results are fully justified in \Cref{sec:scaling-laws}.
    \item[] Guidelines:
    \begin{itemize}
        \item The answer NA means that the paper does not include theoretical results. 
        \item All the theorems, formulas, and proofs in the paper should be numbered and cross-referenced.
        \item All assumptions should be clearly stated or referenced in the statement of any theorems.
        \item The proofs can either appear in the main paper or the supplemental material, but if they appear in the supplemental material, the authors are encouraged to provide a short proof sketch to provide intuition. 
        \item Inversely, any informal proof provided in the core of the paper should be complemented by formal proofs provided in appendix or supplemental material.
        \item Theorems and Lemmas that the proof relies upon should be properly referenced. 
    \end{itemize}

    \item {\bf Experimental Result Reproducibility}
    \item[] Question: Does the paper fully disclose all the information needed to reproduce the main experimental results of the paper to the extent that it affects the main claims and/or conclusions of the paper (regardless of whether the code and data are provided or not)?
    \item[] Answer: \answerYes{} 
    \item[] Justification: Hyperparameter information and other experimental choices are documented in \Cref{sec:cbs-measurement-details,app:noise-scale}.
    \item[] Guidelines:
    \begin{itemize}
        \item The answer NA means that the paper does not include experiments.
        \item If the paper includes experiments, a No answer to this question will not be perceived well by the reviewers: Making the paper reproducible is important, regardless of whether the code and data are provided or not.
        \item If the contribution is a dataset and/or model, the authors should describe the steps taken to make their results reproducible or verifiable. 
        \item Depending on the contribution, reproducibility can be accomplished in various ways. For example, if the contribution is a novel architecture, describing the architecture fully might suffice, or if the contribution is a specific model and empirical evaluation, it may be necessary to either make it possible for others to replicate the model with the same dataset, or provide access to the model. In general. releasing code and data is often one good way to accomplish this, but reproducibility can also be provided via detailed instructions for how to replicate the results, access to a hosted model (e.g., in the case of a large language model), releasing of a model checkpoint, or other means that are appropriate to the research performed.
        \item While NeurIPS does not require releasing code, the conference does require all submissions to provide some reasonable avenue for reproducibility, which may depend on the nature of the contribution. For example
        \begin{enumerate}
            \item If the contribution is primarily a new algorithm, the paper should make it clear how to reproduce that algorithm.
            \item If the contribution is primarily a new model architecture, the paper should describe the architecture clearly and fully.
            \item If the contribution is a new model (e.g., a large language model), then there should either be a way to access this model for reproducing the results or a way to reproduce the model (e.g., with an open-source dataset or instructions for how to construct the dataset).
            \item We recognize that reproducibility may be tricky in some cases, in which case authors are welcome to describe the particular way they provide for reproducibility. In the case of closed-source models, it may be that access to the model is limited in some way (e.g., to registered users), but it should be possible for other researchers to have some path to reproducing or verifying the results.
        \end{enumerate}
    \end{itemize}

\item {\bf Open access to data and code}
    \item[] Question: Does the paper provide open access to the data and code, with sufficient instructions to faithfully reproduce the main experimental results, as described in supplemental material?
    \item[] Answer: \answerNo{} 
    \item[] Justification: If accepted, we will release code with the camera-ready version.
    \item[] Guidelines:
    \begin{itemize}
        \item The answer NA means that paper does not include experiments requiring code.
        \item Please see the NeurIPS code and data submission guidelines (\url{https://nips.cc/public/guides/CodeSubmissionPolicy}) for more details.
        \item While we encourage the release of code and data, we understand that this might not be possible, so “No” is an acceptable answer. Papers cannot be rejected simply for not including code, unless this is central to the contribution (e.g., for a new open-source benchmark).
        \item The instructions should contain the exact command and environment needed to run to reproduce the results. See the NeurIPS code and data submission guidelines (\url{https://nips.cc/public/guides/CodeSubmissionPolicy}) for more details.
        \item The authors should provide instructions on data access and preparation, including how to access the raw data, preprocessed data, intermediate data, and generated data, etc.
        \item The authors should provide scripts to reproduce all experimental results for the new proposed method and baselines. If only a subset of experiments are reproducible, they should state which ones are omitted from the script and why.
        \item At submission time, to preserve anonymity, the authors should release anonymized versions (if applicable).
        \item Providing as much information as possible in supplemental material (appended to the paper) is recommended, but including URLs to data and code is permitted.
    \end{itemize}

\item {\bf Experimental Setting/Details}
    \item[] Question: Does the paper specify all the training and test details (e.g., data splits, hyperparameters, how they were chosen, type of optimizer, etc.) necessary to understand the results?
    \item[] Answer: \answerYes{} 
    \item[] Justification: Provided in \Cref{sec:cbs-measurement-details,app:noise-scale}.
    \item[] Guidelines:
    \begin{itemize}
        \item The answer NA means that the paper does not include experiments.
        \item The experimental setting should be presented in the core of the paper to a level of detail that is necessary to appreciate the results and make sense of them.
        \item The full details can be provided either with the code, in appendix, or as supplemental material.
    \end{itemize}

\item {\bf Experiment Statistical Significance}
    \item[] Question: Does the paper report error bars suitably and correctly defined or other appropriate information about the statistical significance of the experiments?
    \item[] Answer: \answerYes{} 
    \item[] Justification: \Cref{fig:measure-cbs} shows an interval representing lower and upper bounds on the CBS, as documented in \Cref{sec:our-method}.
    \Cref{fig:noise-scale} reports a confidence interval for the noise scale whose details are documented in \Cref{app:noise-scale}.
    \item[] Guidelines:
    \begin{itemize}
        \item The answer NA means that the paper does not include experiments.
        \item The authors should answer "Yes" if the results are accompanied by error bars, confidence intervals, or statistical significance tests, at least for the experiments that support the main claims of the paper.
        \item The factors of variability that the error bars are capturing should be clearly stated (for example, train/test split, initialization, random drawing of some parameter, or overall run with given experimental conditions).
        \item The method for calculating the error bars should be explained (closed form formula, call to a library function, bootstrap, etc.)
        \item The assumptions made should be given (e.g., Normally distributed errors).
        \item It should be clear whether the error bar is the standard deviation or the standard error of the mean.
        \item It is OK to report 1-sigma error bars, but one should state it. The authors should preferably report a 2-sigma error bar than state that they have a 96\% CI, if the hypothesis of Normality of errors is not verified.
        \item For asymmetric distributions, the authors should be careful not to show in tables or figures symmetric error bars that would yield results that are out of range (e.g. negative error rates).
        \item If error bars are reported in tables or plots, The authors should explain in the text how they were calculated and reference the corresponding figures or tables in the text.
    \end{itemize}

\item {\bf Experiments Compute Resources}
    \item[] Question: For each experiment, does the paper provide sufficient information on the computer resources (type of compute workers, memory, time of execution) needed to reproduce the experiments?
    \item[] Answer: \answerYes{} 
    \item[] Justification: Provided in \Cref{sec:cbs-measurement-details,app:noise-scale}.
    \item[] Guidelines:
    \begin{itemize}
        \item The answer NA means that the paper does not include experiments.
        \item The paper should indicate the type of compute workers CPU or GPU, internal cluster, or cloud provider, including relevant memory and storage.
        \item The paper should provide the amount of compute required for each of the individual experimental runs as well as estimate the total compute. 
        \item The paper should disclose whether the full research project required more compute than the experiments reported in the paper (e.g., preliminary or failed experiments that didn't make it into the paper). 
    \end{itemize}
    
\item {\bf Code Of Ethics}
    \item[] Question: Does the research conducted in the paper conform, in every respect, with the NeurIPS Code of Ethics \url{https://neurips.cc/public/EthicsGuidelines}?
    \item[] Answer: \answerYes{} 
    \item[] Justification: No explanation of special circumstances necessary.
    \item[] Guidelines:
    \begin{itemize}
        \item The answer NA means that the authors have not reviewed the NeurIPS Code of Ethics.
        \item If the authors answer No, they should explain the special circumstances that require a deviation from the Code of Ethics.
        \item The authors should make sure to preserve anonymity (e.g., if there is a special consideration due to laws or regulations in their jurisdiction).
    \end{itemize}

\item {\bf Broader Impacts}
    \item[] Question: Does the paper discuss both potential positive societal impacts and negative societal impacts of the work performed?
    \item[] Answer: \answerNA{} 
    \item[] Justification: This work is foundational research on the science of pretraining language models with no immediate impacts.
    \item[] Guidelines:
    \begin{itemize}
        \item The answer NA means that there is no societal impact of the work performed.
        \item If the authors answer NA or No, they should explain why their work has no societal impact or why the paper does not address societal impact.
        \item Examples of negative societal impacts include potential malicious or unintended uses (e.g., disinformation, generating fake profiles, surveillance), fairness considerations (e.g., deployment of technologies that could make decisions that unfairly impact specific groups), privacy considerations, and security considerations.
        \item The conference expects that many papers will be foundational research and not tied to particular applications, let alone deployments. However, if there is a direct path to any negative applications, the authors should point it out. For example, it is legitimate to point out that an improvement in the quality of generative models could be used to generate deepfakes for disinformation. On the other hand, it is not needed to point out that a generic algorithm for optimizing neural networks could enable people to train models that generate Deepfakes faster.
        \item The authors should consider possible harms that could arise when the technology is being used as intended and functioning correctly, harms that could arise when the technology is being used as intended but gives incorrect results, and harms following from (intentional or unintentional) misuse of the technology.
        \item If there are negative societal impacts, the authors could also discuss possible mitigation strategies (e.g., gated release of models, providing defenses in addition to attacks, mechanisms for monitoring misuse, mechanisms to monitor how a system learns from feedback over time, improving the efficiency and accessibility of ML).
    \end{itemize}
    
\item {\bf Safeguards}
    \item[] Question: Does the paper describe safeguards that have been put in place for responsible release of data or models that have a high risk for misuse (e.g., pretrained language models, image generators, or scraped datasets)?
    \item[] Answer: \answerNA{} 
    \item[] Justification: No pretrained models or datasets released.
    \item[] Guidelines:
    \begin{itemize}
        \item The answer NA means that the paper poses no such risks.
        \item Released models that have a high risk for misuse or dual-use should be released with necessary safeguards to allow for controlled use of the model, for example by requiring that users adhere to usage guidelines or restrictions to access the model or implementing safety filters. 
        \item Datasets that have been scraped from the Internet could pose safety risks. The authors should describe how they avoided releasing unsafe images.
        \item We recognize that providing effective safeguards is challenging, and many papers do not require this, but we encourage authors to take this into account and make a best faith effort.
    \end{itemize}

\item {\bf Licenses for existing assets}
    \item[] Question: Are the creators or original owners of assets (e.g., code, data, models), used in the paper, properly credited and are the license and terms of use explicitly mentioned and properly respected?
    \item[] Answer: \answerYes{} 
    \item[] Justification: The licenses for OLMo, C4, and the Pile are acknowledged in \Cref{sec:other-details}.
    \item[] Guidelines:
    \begin{itemize}
        \item The answer NA means that the paper does not use existing assets.
        \item The authors should cite the original paper that produced the code package or dataset.
        \item The authors should state which version of the asset is used and, if possible, include a URL.
        \item The name of the license (e.g., CC-BY 4.0) should be included for each asset.
        \item For scraped data from a particular source (e.g., website), the copyright and terms of service of that source should be provided.
        \item If assets are released, the license, copyright information, and terms of use in the package should be provided. For popular datasets, \url{paperswithcode.com/datasets} has curated licenses for some datasets. Their licensing guide can help determine the license of a dataset.
        \item For existing datasets that are re-packaged, both the original license and the license of the derived asset (if it has changed) should be provided.
        \item If this information is not available online, the authors are encouraged to reach out to the asset's creators.
    \end{itemize}

\item {\bf New Assets}
    \item[] Question: Are new assets introduced in the paper well documented and is the documentation provided alongside the assets?
    \item[] Answer: \answerNA{} 
    \item[] Justification: No new assets released.
    \item[] Guidelines:
    \begin{itemize}
        \item The answer NA means that the paper does not release new assets.
        \item Researchers should communicate the details of the dataset/code/model as part of their submissions via structured templates. This includes details about training, license, limitations, etc. 
        \item The paper should discuss whether and how consent was obtained from people whose asset is used.
        \item At submission time, remember to anonymize your assets (if applicable). You can either create an anonymized URL or include an anonymized zip file.
    \end{itemize}

\item {\bf Crowdsourcing and Research with Human Subjects}
    \item[] Question: For crowdsourcing experiments and research with human subjects, does the paper include the full text of instructions given to participants and screenshots, if applicable, as well as details about compensation (if any)? 
    \item[] Answer: \answerNA{} 
    \item[] Justification: No human subjects involved.
    \item[] Guidelines:
    \begin{itemize}
        \item The answer NA means that the paper does not involve crowdsourcing nor research with human subjects.
        \item Including this information in the supplemental material is fine, but if the main contribution of the paper involves human subjects, then as much detail as possible should be included in the main paper. 
        \item According to the NeurIPS Code of Ethics, workers involved in data collection, curation, or other labor should be paid at least the minimum wage in the country of the data collector. 
    \end{itemize}

\item {\bf Institutional Review Board (IRB) Approvals or Equivalent for Research with Human Subjects}
    \item[] Question: Does the paper describe potential risks incurred by study participants, whether such risks were disclosed to the subjects, and whether Institutional Review Board (IRB) approvals (or an equivalent approval/review based on the requirements of your country or institution) were obtained?
    \item[] Answer: \answerNA{} 
    \item[] Justification: No human subjects involved.
    \item[] Guidelines:
    \begin{itemize}
        \item The answer NA means that the paper does not involve crowdsourcing nor research with human subjects.
        \item Depending on the country in which research is conducted, IRB approval (or equivalent) may be required for any human subjects research. If you obtained IRB approval, you should clearly state this in the paper. 
        \item We recognize that the procedures for this may vary significantly between institutions and locations, and we expect authors to adhere to the NeurIPS Code of Ethics and the guidelines for their institution. 
        \item For initial submissions, do not include any information that would break anonymity (if applicable), such as the institution conducting the review.
    \end{itemize}

\end{enumerate}

\end{document}